\newcommand{\ourmethod}{GNeuralFlow\xspace}
\newcommand{\sstd}[1]{\text{\scriptsize{\color{gray}($\pm$ #1)}}}
\definecolor{lightgray}{gray}{0.9}
\newcommand{\gcell}{\cellcolor{lightgray}}
\newcommand{\va}{\mathbf{a}}
\newcommand{\vh}{\mathbf{h}}
\newcommand{\vx}{\mathbf{x}}
\newcommand{\vz}{\mathbf{z}}
\newcommand{\mA}{\mathbf{A}}
\newcommand{\mB}{\mathbf{B}}
\newcommand{\mH}{\mathbf{H}}
\newcommand{\mI}{\mathbf{I}}
\newcommand{\mU}{\mathbf{U}}
\newcommand{\mW}{\mathbf{W}}
\newcommand{\mX}{\mathbf{X}}
\newcommand{\mZ}{\mathbf{Z}}
\newcommand{\tL}{\mathcal{L}}
\newcommand{\tN}{\mathcal{N}}
\newcommand{\vtheta}{\bm{\theta}}
\newcommand{\vmu}{\bm{\mu}}
\newcommand{\vsigma}{\bm{\sigma}}
\newcommand{\Romannumber}[1]{\uppercase\expandafter{\romannumeral #1}}
\newcommand{\real}{\mathbb{R}}
\DeclareMathOperator{\diag}{diag}
\DeclareMathOperator{\tr}{tr}
\DeclareMathOperator{\sign}{sign}
\DeclareMathOperator{\mean}{E}
\DeclareMathOperator{\pa}{pa}
\DeclareMathOperator{\relu}{ReLU}
\DeclareMathOperator{\mlp}{MLP}
\DeclareMathOperator{\gcn}{GCN}
\DeclareMathOperator{\lstm}{LSTM}
\DeclareMathOperator{\gru}{GRU}
\DeclareMathOperator{\sigmoid}{sigmoid}
\DeclareMathOperator*{\argmin}{argmin}
\DeclareMathOperator{\expm}{expm}
\theoremstyle{definition}
\newtheorem{problem}{Problem}
\newtheorem{theorem}{Theorem}
\title{Graph Neural Flows for Unveiling Systemic Interactions Among Irregularly Sampled Time Series}
\author{%
Giangiacomo Mercatali \thanks{Work done while at the University of Manchester} \\
HES-SO Genève \\
University of Manchester  \\
\texttt{giangiacomo.mercatali@hesge.ch} \\
\And
Andre Freitas \\
Idiap Research Institute \\
University of Manchester  \\
NBC, CRUK Manchester Institute \\
\texttt{andre.freitas@idiap.ch} \\
\And
Jie Chen \\
MIT-IBM Watson AI Lab \\
IBM Research \\
\texttt{chenjie@us.ibm.com} 
}
\begin{document}

\maketitle

\begin{abstract}
  Interacting systems are prevalent in nature. It is challenging to accurately predict the dynamics of the system if its constituent components are analyzed independently. We develop a graph-based model that unveils the systemic interactions of time series observed at irregular time points, by using a directed acyclic graph to model the conditional dependencies (a form of causal notation) of the system components and learning this graph in tandem with a continuous-time model that parameterizes the solution curves of ordinary differential equations (ODEs). Our technique, a graph neural flow, leads to substantial enhancements over non-graph-based methods, as well as graph-based methods without the modeling of conditional dependencies. We validate our approach on several tasks, including time series classification and forecasting, to demonstrate its efficacy.
\end{abstract}

\section{Introduction}
Real-life dynamical systems consist of a group of components interacting in a complex manner. With time series data for each component, predicting the system dynamics remains challenging, because modeling each component independently is straightforward while accounting for their interactions is hard without a priori knowledge. Sometimes, these interactions are causal. For example, ``phantom jams'' in which a small disturbance (e.g., a driver hitting the brake too hard) in a heavy traffic can be amplified over a large area of the transportation network~\cite{Flynn2009}. While traffic congestion often enjoys spatial proximity, outage of a power network can be propagated non-locally over the grid; i.e., a sequence of blackouts jumps across hundreds of kilometers~\cite{Hines2017}. We pose this question: What time series models best capture the interactive nature of different system dynamics?

Graph neural networks (GNNs)~\cite{Zhou2020,Wu2021} are modern tools to enhance time series models when multiple time series are interconnected by a given graph~\cite{Li2018}. In these models, time series are encoded by using a recurrent neural network (RNN); at every time step, a GNN is used to aggregate features over the graph. When the graph is unknown, graph-structure learning approaches have been proposed; some approaches learn a single interaction graph over time~\cite{Kipf2018,Shang2021,Dai2022} while others infer a different one at each time step~\cite{Graber2020}. These models are in general discrete-time models, suitable for regularly spaced time points~\cite{Rubanova2019,Schirmer2022}.

To handle irregular time points, we consider continuous-time models. The celebrated neural ordinary differential equation (ODE) technique~\cite{Chen2018} models a time series as the solution of an unknown ODE and optimizes the ODE parameters by using gradients computed through the adjoint. Neural ODEs were later adopted for latent variable modeling~\cite{Rubanova2019,Brouwer2019}, which introduced discontinuities at observed time points for reducing prediction errors and variances. Neural ODEs were also adopted for graph-based modeling~\cite{poli2019graph,huang2020learning,huang2021coupled,jiang2023cf,Choi2022,Jin2023,Bhaskar2024}, where the equation accounts for multiple time series and the right-hand side uses a graph to associate the different series. These models construct a graph, which could be time-dependent, in various manners, such as based on node features, co-observations within a sliding window, latent representations, or attentions.

In this work, we propose \emph{learning} a graph that reveals the dependency structure of the time series. To this end, we consider a form of causal notation---the Bayesian network~\cite{Pearl1985,Pearl2000}---which is a directed acyclic graph (DAG), where a node is conditionally independent of its non-descendents given its parents~\cite{NishikawaToomey2022,Deleu2023,Smith2024,Hiremath2024}. Such a conditional dependence structure specifies how component dynamics depends on each other. This model has a potential for causal discovery when one interprets the learned graph unknown a priori (e.g., how blackouts cascade over the power grid, whose known topology differs from the unknown influence graph~\cite{Hines2017}). More importantly, when modeled properly, the graph can improve the performance of downstream tasks because of the capturing of systemic interactions.

Our proposed model is a \emph{graph neural flow} (\ourmethod). A neural flow~\cite{Bilos2021} is the learned solution of an unknown ODE based on irregularly sampled time series; it is advantageous over the neural ODE technique in that it models directly the ODE solution rather than the right-hand side, thus avoiding repeating calls of a numerical solver, whose cost could be expensive. We condition multiple neural flows, one for each time series, on the DAG, and we instantiate their interactions as a GNN; e.g., a graph convolutional network, GCN~\cite{Kipf2017}. The graph convolution therein augments the parameterization of the ODE solution by aggregating the information of the neighboring time series at each time point, fitting a graph-conditioned ODE that models the interacting system.

Thus, \ourmethod is advantageous over prior graph ODE approaches (e.g., GDE~\cite{poli2019graph}, LG-ODE~\cite{huang2020learning}, CG-ODE~\cite{huang2021coupled}, CF-GODE~\cite{jiang2023cf}, STG-NCDE~\cite{Choi2022}, MTGODE~\cite{Jin2023}, RiTINI~\cite{Bhaskar2024}) in two aspects. First, through learning, the graph reveals the conditional dependencies of the time series, offering a more intuitive structure for analysis. Second, it removes the reliance on numerical ODE solvers and gains computational efficiency. We demonstrate empirical evidence to show that \ourmethod outperforms graph ODE approaches in several downstream tasks, on both synthetic and real-life data.

We highlight the following contributions of this work:
\begin{itemize}[leftmargin=*]
\item We propose a novel graph-based continuous-time model \ourmethod for learning systemic interactions. The interactions are modeled as a Bayesian network, which can be learned in tandem with other model parameters.

\item We design model parameterizations by leveraging GNNs to encode the systemic interactions. These parameterizations can additionally be used in latent variable modeling.

\item We demonstrate the use of \ourmethod in regression problems and latent variable modeling and show the performance improvement in several time series classification and forecasting benchmarks.
\end{itemize}

\section{Background: Neural ODE and Neural Flows}
Denote by $\vx(t) \in \real^d$ the solution of an ODE
\begin{equation}\label{eqn:NODE}
\dot{\vx} = f(t, \vx)
\end{equation}
under well-behaving conditions (e.g., a specified initial condition of $\vx$ and Lipschitz continuity of $f$). \emph{Neural ODE}~\cite{Chen2018} is a modeling technique that allows uncovering the trajectory $\vx(t)$ without a known right-hand side $f$. The technique parameterizes $f$ by a neural network with parameters $\vtheta$ such that the trajectory $\vx$ is a function of $\vtheta$. Through matching the trajectory with observed data at a few (possibly irregular) time points by using a loss function $L$, the vector field $f$ is unveiled. Given the initial condition $\vx(t_0)=\vx_0$, we write $\vx(t_0),\ldots,\vx(t_N) = \text{ODESolve}(f, \vx_0, (t_0, \ldots, t_N))$, where the solutions at times $t_1,\ldots,t_N$ are obtained by invoking any blackbox numerical ODE solver (such as Runge--Kutta~\cite{Forsythe1977}). The training of the model parameters $\vtheta$ requires the gradient $\nabla_{\vtheta} L$, which can be economically computed by using the adjoint $\va := \nabla_{\vx} L$ rather than expensively back-propagating through the ODE solver.

Neural ODE has two far-reaching impacts. First, it is a continuous-time technique, which is a better alternative to discrete-time techniques (such as RNNs) for modeling irregularly sampled time series~\cite{Rubanova2019}. Second, it leads to a continuous version of the \emph{normalizing flow}~\cite{Papamakarios2021}.

Other than directly modeling the observed data $\vx$, \citet{Chen2018} proposed to use neural ODEs as latent variable models, which model the latents $\vz$ instead; that is, $\dot{\vz} = f(t, \vz)$. A straightforward idea is to build a variational autoencoder (VAE)~\cite{Kingma2014}, where the encoder is an RNN that evolves the hidden state $\vh(t)$ over training time points and concludes a latent variable $\vz_0$ as the ODE initial condition.
A drawback is that this approach still uses a discrete-time model (RNN) to handle irregularly sampled observations. Two approaches mitigating this drawback are ODE-RNN~\cite{Rubanova2019} and GRU-ODE-Bayes~\cite{Brouwer2019}.
Both approaches demonstrate smaller prediction errors and variances compared with the vanilla neural ODE + VAE approach.

Another drawback of neural ODE is that it invokes a numerical solver, often multiple times in the adjoint computation because of multiple time intervals, which can be rather time consuming. A \emph{neural flow}~\cite{Bilos2021} is an alternative to neural ODE as it models the solution of~\eqref{eqn:NODE} directly:
\[
\vx(t) = F(t, \vx_0),
\]
by using a parameterized function $F$ that depends on the initial condition $\vx_0$. Optimizing the parameters of $F$ can be more efficient because ODE solvers are no longer needed. A neural flow is not to be confused with a normalizing flow.
Neural flows can replace the use of neural ODEs in latent variable models ODE-RNN and GRU-ODE-Bayes.


\section{DAG-Based ODE for Modeling Systemic Interactions}\label{sec:example}
In this section, we motivate the form of ODE considered in this paper based on DAG modeling.

\subsection{DAG Model for Systemic Interactions}
In probabilistic graphical models, the conditional dependence structure is a principled framework for modeling systemic interactions. Therein, a \emph{Bayesian network}~\cite{Pearl1985} of $n$ random variables $y^1, \ldots, y^n$ is a DAG with these variables as the nodes. Let $\mA \in \real^{n \times n}$ be the (weighted) adjacency matrix of the DAG, where $a_{ij} \ne 0$ means that $y^i$ is a parent of $y^j$. A Bayesian network describes the conditional dependence structure of the variables; namely, a node is conditionally independent of its non-descendents given its parents. Therefore, the joint probability $p(y^1, \ldots, y^n)$ can be factorized into a much simpler form: $p(y^1, \ldots, y^n) = \prod_{j=1}^n p(y^j \,|\, \text{pa}(y^j))$, where $\text{pa}(y^j) = \{y^i : a_{ij}\ne0\}$ denotes the parent set of $y^j$. The conditional dependence is a necessary condition for the causal relationship between parent $y^i$ and child $y^j$~\cite{Pearl2000}.

The (linear) structural equation model, SEM~\cite{Wright1921,Duncan1975}, is a commonly used tool to further quantify the conditional probabilities. Without loss of generality, assume that the random variables are topologically sorted according to the partial ordering $\prec$, where $i \prec j$ iff there exists an edge from $i$ to $j$. Then, the DAG adjacency matrix $\mA = [a_{ij}]$ is strictly upper triangular. The SEM model reads
\begin{alignat*}{5}
y^1 &=           &     &           &     &        &     &                 &     & \epsilon_1 \\
y^2 &= a_{12} y^1 &     &           &     &        &     &                 &\,+\,& \epsilon_2 \\
y^3 &= a_{13} y^1 &\,+\,& a_{23} y^2 &     &        &     &                 &\,+\,& \epsilon_3 \\
\vdots \\
y^n &= a_{1n} y^1 &\,+\,& a_{2n} y^2 &\,+\,& \cdots &\,+\,& a_{n-1,n} y^{n-1} &\,+\,& \epsilon_n,
\end{alignat*}
where the residuals $\epsilon_1,\ldots,\epsilon_n$ are (possibly correlated) Gaussian noises. In this model, $y^j$ depends on $y^i$ only when $i<j$. Importantly, some of the above $a_{ij}$'s can be zero. Then, $y^j$ is independent of such $y^i$'s given the rest.

\subsection{DAG-Based ODE as Continuous-Time Models}
Consider an autonomous ODE $\dot{\vx} = \mB \vx$ with the initial condition $\vx(0) = \vx_0 \in \real^2$. This ODE describes a 2D vector field $\mB\vx$; each solution curve $\vx(t) = \expm(\mB t)\vx_0$ given the initial point $\vx_0$ is a streamline instantaneously tangential to the vector field. Throughout the paper, we use $\expm$ to denote matrix exponential for matrix arguments and $\exp$ to denote element-wise exponential.

Now consider $n$ trajectories $\vx^1(t), \ldots, \vx^n(t)$. Inspired by SEM, we model that (i) the vector fields that generate the $n$ trajectories follow the same conditional dependence structure governed by $\mA$ and (ii) the residual $\mB\vx^j - \sum_{i=1}^{j-1} a_{ij} \mB\vx^i$ gives the velocity $\dot{\vx}^j$ for each $j$. Mathematically,
\begin{equation}\label{eqn:ex}
\begin{alignedat}{5}
\mB\vx^1 &=                &     &                &     &        &     &                       &     & \dot{\vx}^1 \\
\mB\vx^2 &= a_{12} \mB\vx^1 &     &                &     &        &     &                      &\,+\,& \dot{\vx}^2 \\
\mB\vx^3 &= a_{13} \mB\vx^1 &\,+\,& a_{23} \mB\vx^2 &     &        &     &                      &\,+\,& \dot{\vx}^3 \\
\vdots \\
\mB\vx^n &= a_{1n} \mB\vx^1 &\,+\,& a_{2n} \mB\vx^2 &\,+\,& \cdots &\,+\,& a_{n-1,n} \mB\vx^{n-1} &\,+\,& \dot{\vx}^n.
\end{alignedat}
\end{equation}
In other words, the solution curve $\vx^j$ for each $j$ and any initial point $\vx_0^j$ is a streamline instantaneously tangential to the residual field.

\begin{figure*}[t]
  \centering
  \includegraphics[width=\linewidth]{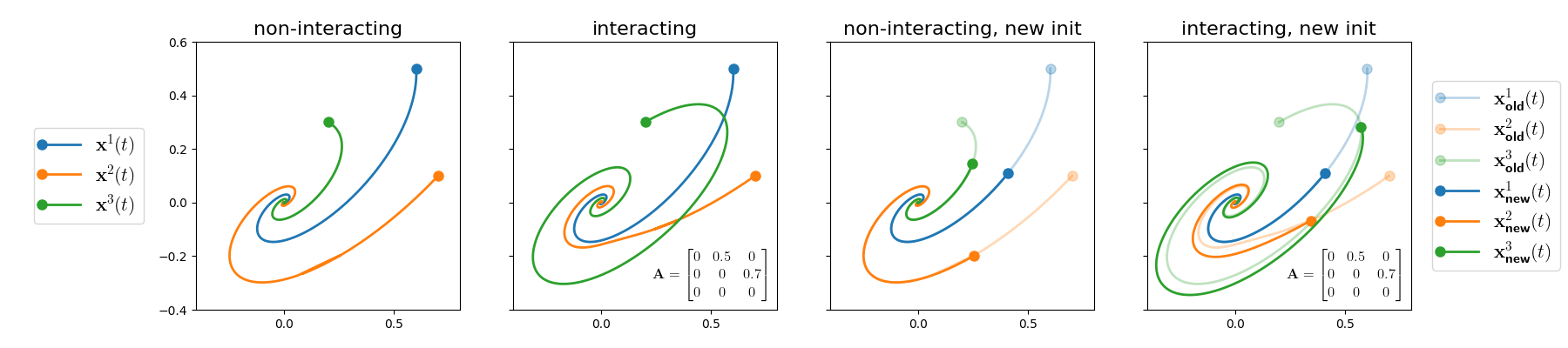}
  \vskip -10pt
  \caption{Left two: Trajectories of a non-interacting system and an interacting system (using interaction matrix $\mA$), under the same initial conditions. Right two: Replica of the left two systems but the initial conditions are changed. Trajectories change on the rightmost plot.}
  \label{fig:illustration}
\end{figure*}

The behaviors of non-interacting and interacting systems are fundamentally different. Figure~\ref{fig:illustration} illustrates an example.
The first plot shows three independent trajectories satisfying $\dot{\vx}^i = \mB\vx^i$ with initial conditions $\vx_0^i$ for $i=1,2,3$.
The second plot shows three conditionally dependent trajectories, under the same initial conditions, but interacting through the DAG adjacency matrix
\[
\mA = \begin{bmatrix} 0 & 0.5 & 0 \\ 0 & 0 & 0.7 \\ 0 & 0 & 0 \end{bmatrix}.
\]
Because $\vx^1$ is independent of the rest, it is the same in both plots; but in the second plot, because $\vx^2$ depends on $\vx^1$ and $\vx^3$ depends on $\vx^2$, these two trajectories are different from their counterparts in the first plot. Moreover, in the third and fourth plots, we change the initial conditions. As long as the new initial points are along the original trajectories, the new trajectories still follow the old ones when $\mA$ does not exist; however, when $\mA$ exists, $\vx^2$ and $\vx^3$ deviate from the original trajectories, because of the conditional dependence.

\subsection{From Linear Dependence to General}\label{sec:example.general}
In general, the dependence among the time series may not be linear, even though structurally it is governed by the matrix $\mA$. For example, a nonlinear SEM may inspire the ODE system $\mB\vx^1 = \dot{\vx}^1$, $\mB\vx^2 = \mB\vx^1 + \dot{\vx}^2$, $\mB\vx^3 = \mB\vx^1 \odot \mB\vx^2 + \dot{\vx}^3$, where the dependence of $\vx^3$ on $\vx^1$ and $\vx^2$ is not linear. Thus, we consider general ODE systems of the form
\begin{equation}\label{eqn:ex2}
  \dot{\vx}^j = f( t, \, \{\vx^j\} \cup \pa(\vx^j) ), \quad j = 1,\ldots,n,
\end{equation}
where recall that $\pa(\vx^j)$ denotes the set of parents of $\vx^j$. Equivalently, we write $\dot{\mX} = f(t, \mX, \mA)$ in the matrix form. Note that~\eqref{eqn:ex} is a special case of~\eqref{eqn:ex2}, which can be written as $\dot{\mX} = (\mI - \mA^{\top})\mX\mB^{\top}$. Note also that~\eqref{eqn:ex2} is permutation equivariant and $\mA$ can be any DAG matrix, not necessarily upper triangular ones.

\section{\ourmethod: An ODE-Solver-Free Method}

\subsection{Problem Setup and Model Framework}
\begin{problem}\label{prob:main}
Let $\mA \in \real^{n \times n}$ be the weighted adjacency matrix of a DAG and let $\mX(t) : \real \to \real^{n \times d}$ be the solution curve of the initial-value ODE system
\begin{equation}\label{eqn:problem}
  \dot{\mX} = f(t, \mX, \mA) \quad\text{with}\quad \mX(0) = \mX_0,
\end{equation}
where the right-hand side $f$ is unknown.%
\footnote{In practice, the initial time point $t_0$ may not be zero, in which case one may shift the ODE along the temporal dimension by $t_0$.}
Given data $\mX(t_0), \ldots, \mX(t_N)$ at irregular time points, develop a model that predicts $\mX(t)$ for any $t \ge t_0$ as well as $\mA$. To account for practical use, at some time point $t_j$, some rows of $\mX(t_j)$ may be missing.%
\end{problem}

A growing body of research addresses the problem when $\mX$ has a single row (i.e., $n=1$; hence, $\mA$ is irrelevant), notably through using a neural network to parameterize $f$ and using a numerical solver to evaluate $\mX$ at $t_0,\ldots,t_N$~\cite{Chen2018,Rubanova2019,Brouwer2019}. When $n>1$ and the rows of the system are independent (i.e., $f$ is identically the same function for each row), these methods straightforwardly apply through batch training. However, when the rows of the system are not independent, the problem becomes rather challenging. One may flatten~\eqref{eqn:problem} into an $nd$-dimensional problem, but such a high dimension renders approaches using numerical solvers too costly.

Instead, we use a neural network to parameterize the solution of~\eqref{eqn:problem} directly; i.e., 
\begin{equation}\label{eqn:sol}
  \mX(t) = F(t, \mX_0, \mA),
\end{equation}
where the solution $F$ is a function of $t$ but depends on the initial $\mX_0$ as well as $\mA$. The neural network parameterization cannot be entirely free. First, the solution $F$ should satisfy the initial condition $F(0, \mX_0, \mA) = \mX_0$. Second, the fundamental theorem on flows~\cite[Theorem 9.12]{Lee2012} asserts that every smooth $f$ with an initial condition determines a unique $F(t, \mX_0, \mA)$ and for any $t$, $F(t, \cdot, \mA)$ is a diffeomorphism.
Therefore, we formulate our model for Problem~\ref{prob:main} in the following.

\textbf{Solution framework.} The model for $\mX(t)$ is a neural network $F(t, \mX, \mA)$ that satisfies:
\begin{enumerate}[leftmargin=*]
\item $F(0, \mX_0, \mA) = \mX_0$;
\item $F(t, \mX, \mA)$ is invertible in $\mX$ for any $t$ and $\mA$; equivalently, the streamline $F(t, \mX_0, \mA)$ given any $\mX_0$ and $\mA$ is not self-intersecting.
\end{enumerate}

\subsection{Graph Encoder}\label{sec:graph.enc}
We will make heavy use of a GNN to encode the DAG adjacency matrix $\mA$ for parameterizing $F$. For simplicity, we employ the seminal architecture GCN. A (popularly used) two-layer GCN reads
\begin{equation}\label{eqn:GCN}
\widetilde{\mX} = \gcn(\mA, \mX) = \widehat{\mA}\relu(\widehat{\mA}\mX\mW)\mU,
\end{equation}
where $\mX$ is the input node feature matrix, $\widetilde{\mX}$ contains the transformed features, and $\mW$ and $\mU$ are parameters. GCN defines $\widehat{\mA}$ as a symmetric normalization of $\mA$, but many alternatives are viable, such as a simple scaling of $\mA$. We also consider
\begin{equation}\label{eqn:hat.A}
\textstyle
\widehat{\mA} = \mI - \mA^{\top} / \gamma,
\quad\text{where}\quad
\gamma = \max_j\big\{\sum_{i\ne j}|\mB_{ij}|\big\}
\,\,\text{ and }\,\,
\mB = \mA + \mA^{\top}.
\end{equation}
This definition is motivated by SEM, where $\mI-\mA^{\top}$ is the operator (Section~\ref{sec:example.general}). Here, $\mA$ is not symmetrized because doing so cannot distinguish edge directions. A benefit of taking~\eqref{eqn:hat.A} is that the scaling factor $\gamma$ leads to a bounded spectral norm, which is an ingredient of invertibility required by the \textbf{Solution framework}.

\begin{theorem}\label{thm:GCN}
  For any DAG adjacency matrix $\mA$, the matrix $\widehat{\mA}$ defined in~\eqref{eqn:hat.A} admits $\|\widehat{\mA}\|_2 \le 2$.
\end{theorem}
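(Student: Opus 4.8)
The plan is to bound $\|\widehat{\mA}\|_2 = \|\mI - \mA^{\top}/\gamma\|_2$ by the triangle inequality, $\|\widehat{\mA}\|_2 \le 1 + \|\mA^{\top}\|_2/\gamma = 1 + \|\mA\|_2/\gamma$, and then to show $\|\mA\|_2 \le \gamma$. Since $\gamma = \max_j \sum_{i \ne j} |\mB_{ij}|$ with $\mB = \mA + \mA^{\top}$, and $\mB$ is symmetric with zero diagonal (the DAG has no self-loops), $\gamma$ is exactly the maximum absolute row sum of $\mB$, i.e. $\gamma = \|\mB\|_\infty$, which for a symmetric matrix also equals $\|\mB\|_1$. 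The key inequality I would establish is $\|\mA\|_2 \le \|\mB\|_\infty = \gamma$, from which the theorem follows immediately: $\|\widehat{\mA}\|_2 \le 1 + 1 = 2$.

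To get $\|\mA\|_2 \le \|\mB\|_\infty$, I would use the standard bound $\|\mA\|_2 \le \sqrt{\|\mA\|_1 \|\mA\|_\infty}$ (the spectral norm is at most the geometric mean of the max column sum and the max row sum). Then I would observe that the max row sum of $\mA$ is bounded by the max row sum of $|\mA| \le |\mA| + |\mA^{\top}|$ entrywise, and similarly the max column sum of $\mA$ equals the max row sum of $\mA^{\top}$, which is bounded by the max row sum of $|\mA| + |\mA^{\top}|$. Since $|\mB_{ij}| = |a_{ij} + a_{ji}| \le |a_{ij}| + |a_{ji}|$ and — crucially — for a DAG at most one of $a_{ij}, a_{ji}$ is nonzero (a DAG has no $2$-cycles), we in fact have $|\mB_{ij}| = |a_{ij}| + |a_{ji}|$ exactly. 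Hence both $\|\mA\|_1 \le \|\mB\|_\infty$ and $\|\mA\|_\infty \le \|\mB\|_\infty$, so $\|\mA\|_2 \le \sqrt{\|\mB\|_\infty \cdot \|\mB\|_\infty} = \|\mB\|_\infty = \gamma$.

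The main obstacle — really the only subtlety — is making sure the DAG hypothesis is used correctly so that $|\mB_{ij}| = |a_{ij}| + |a_{ji}|$ rather than merely $\le$; this is what prevents cancellation in $a_{ij} + a_{ji}$ from shrinking $\gamma$ below what is needed. If one only used $|\mB_{ij}| \le |a_{ij}| + |a_{ji}|$ the argument would break, but acyclicity guarantees no opposing edges, so the off-diagonal support of $\mA$ and $\mA^\top$ is disjoint and the equality holds. One should also note the edge case $\gamma = 0$, which occurs only when $\mA = 0$; then $\widehat{\mA} = \mI$ and $\|\widehat{\mA}\|_2 = 1 \le 2$ trivially, so the bound holds in all cases. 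Everything else is the routine chain of matrix-norm inequalities recalled above.
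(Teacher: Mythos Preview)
Your proof is correct and takes a genuinely different route from the paper's. Both arguments start the same way, with the triangle inequality $\|\widehat{\mA}\|_2 \le 1 + \|\mA\|_2/\gamma$ and the goal $\|\mA\|_2 \le \gamma$, but they diverge at the key step. The paper bounds $\|\mA\|_2$ via the spectral radius of the symmetrized matrix: it applies the Gershgorin circle theorem to $\mB$ (whose diagonal is zero) to get $\rho(\mB) \le \gamma$, and then invokes a cited result of Hoppen et al.\ stating that for any orientation $\mA$ of an undirected graph with adjacency matrix $\mB$, one has $\|\mA\|_2 \le \rho(\mB)$. You instead use the elementary inequality $\|\mA\|_2 \le \sqrt{\|\mA\|_1\,\|\mA\|_\infty}$ together with the observation (exploiting acyclicity to rule out $2$-cycles) that $|\mB_{ij}| = |a_{ij}| + |a_{ji}|$, which gives $\|\mA\|_1,\|\mA\|_\infty \le \|\mB\|_\infty = \gamma$ directly. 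Your argument is fully self-contained and avoids the external citation; the paper's route yields the marginally sharper intermediate estimate $\|\mA\|_2 \le \rho(\mB)$ (which can be strictly smaller than $\|\mB\|_\infty$), but both paths land on the same final bound $\|\widehat{\mA}\|_2 \le 2$. Your handling of the degenerate case $\gamma=0$ is also a small plus, since the paper leaves it implicit.
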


\subsection{Parameterization of $F$}\label{sec:parameterization}
The neural network $F$ in~\eqref{eqn:sol} can be defined in several ways by incorporating the graph encoder while satisfying the \textbf{Solution framework}.

\textbf{ResNet flow.}
The first design is the ResNet architecture
\begin{equation}\label{eqn:resnet.F}
F(t, \mX, \mA) = \mX + \varphi(t) \cdot g(t, \mX, \mA),
\end{equation}
which is a building block of invertible networks~\cite{Behrmann2019}. Here, $\varphi(t)$ satisfies $\varphi(0)=0$ such that the requirement $F(0, \mX_0, \mA) = \mX_0$ is met. Additionally, if $\varphi(\cdot)\in[0,1]$ and $g(t,\cdot,\mA)$ is a contractive mapping, then $F(t,\cdot,\mA)$ is invertible.

We let $\varphi$ be the tanh function and parameterize $g$ by using two MLPs together with a GCN:
\begin{equation}\label{eqn:resnet.g}
g(t, \mX, \mA) = \mlp^1(\mX || \widetilde{\mX} || t) \odot \mlp^2(\mX || t),
\qquad \widetilde{\mX} = \gcn(\mA, \mX),
\end{equation}
where $||$ denotes concatenation row-wise and each MLP acts on the input matrix row-wise independently. The neural network $g$ is generally not contractive, but bounding the spectral norm of each linear layer can theoretically guarantee contraction of an MLP~\cite{Gouk2021}. Moreover, Theorem~\ref{thm:GCN} indicates that bounding the spectral norm of the GCN parameters can guarantee contraction of GCN as well (because $\|\widehat{\mA}\|_2$ is bounded). Thus, in theory, $g$ can be made contractive. In practice, regularization is used to encourage a small Lipschitz constant of $g$~\cite{Gouk2021}.

\textbf{GRU flow.}
The second design mimics the GRU~\citep{cho2014properties}:
\begin{equation}\label{eqn:gru.F}
F(t, \mX, \mA) = \mX + \varphi(t) \cdot h^1(t, \mX) \odot h^2(t, \widetilde{\mX}),
\qquad \widetilde{\mX} = \gcn(\mA, \mX),
\end{equation}
where $h^k$, $k=1,2$, is computed by
\begin{alignat*}{3}
r^k(t, \mX) &= \beta \cdot \sigmoid(f_r^k(t, \mX)), \qquad&
c^k(t, \mX) &= \tanh(f_c^k(t, r^k(t,\mX)\odot\mX)), \\
z^k(t, \mX) &= \alpha \cdot \sigmoid(f_z^k(t, \mX)), \qquad&
h^k(t, \mX) &= z^k(t, \mX) \odot (c^k(t, \mX) - \mX).
\end{alignat*}
The base form $\mX + \varphi(t) \cdot h(t, \mX)$, analogous to ResNet, comes from \cite{Brouwer2019}, who derived an ODE with the right-hand side being $h$ through algebraic manipulation of the GRU. We extend the base form by including an analogous term $h^2$ that incorporates the graph encoder. It can be shown that $F$ is invertible under a deliberate choice of $\alpha$ and $\beta$ when the MLPs $f_z^k$, $f_r^k$, $f_c^k$ and the GCN are contractive. As discussed earlier, Theorem~\ref{thm:GCN} indicates that GCN can be made contractive similarly as the MLPs through bounding the spectral norm of their parameters.

\begin{theorem}\label{thm:gru.flow}
  If $f_z^k(t,\cdot)$, $f_r^k(t,\cdot)$, $f_c^k(t,\cdot)$, and $\gcn(\mA,\cdot)$ are contractive, the function $F(t, \cdot, \mA)$ defined in~\eqref{eqn:gru.F} is invertible whenever $\alpha(5\beta+6)\le2$.
\end{theorem}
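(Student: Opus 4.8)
The plan is to put $F$ in residual form $F(t,\mX,\mA)=\mX+G(t,\mX,\mA)$, where $G(t,\mX,\mA)=\varphi(t)\cdot h^1(t,\mX)\odot h^2(t,\widetilde\mX)$ and $\widetilde\mX=\gcn(\mA,\mX)$, and to invoke the standard contraction argument for residual maps (as in~\cite{Behrmann2019}): once $\lip\big(G(t,\cdot,\mA)\big)\le1$ is established, for every target $\mY$ the map $\mX\mapsto\mY-G(t,\mX,\mA)$ is (non-strictly) contractive, and $|\varphi(t)|<1$ for $t>0$ upgrades this to a genuine contraction, so the Banach fixed-point theorem gives a unique preimage of $\mY$; hence $F(t,\cdot,\mA)$ is a bijection, equivalently its streamlines do not self-intersect, as the \textbf{Solution framework} demands (the case $t=0$ is trivial since $\varphi(0)=0$ makes $F=\mathrm{Id}$). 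Thus the entire proof reduces to the estimate $\lip\big(G(t,\cdot,\mA)\big)\le\tfrac{\alpha(5\beta+6)}{2}$.

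To get that estimate I would first record the elementary facts: $\sigmoid$ is $\tfrac14$-Lipschitz with range $(0,1)$; $\tanh$ (hence $\varphi$) is $1$-Lipschitz with range $(-1,1)$; the MLPs $f_r^k,f_c^k,f_z^k$ and $\gcn(\mA,\cdot)$ are contractive by hypothesis; and $\gcn(\mA,\zeros)=\zeros$ so $\|\gcn(\mA,\mX)\|\le\|\mX\|$. Propagating these through the gate definitions then yields: $z^k(t,\cdot)$ is $\tfrac\alpha4$-Lipschitz with $\|z^k\|_\infty\le\alpha$; $r^k(t,\cdot)$ is $\tfrac\beta4$-Lipschitz with $\|r^k\|_\infty\le\beta$; the inner map $\mX\mapsto r^k(t,\mX)\odot\mX$ is $\tfrac{5\beta}{4}$-Lipschitz by the Lipschitz product rule $\lip(u\odot v)\le\|u\|_\infty\lip(v)+\|v\|_\infty\lip(u)$, so $c^k(t,\cdot)=\tanh\big(f_c^k(t,\,r^k(t,\cdot)\odot\cdot)\big)$ is $\tfrac{5\beta}{4}$-Lipschitz with $\|c^k\|_\infty\le1$; and one more application of the product rule to $h^k=z^k\odot(c^k-\mX)$ gives $\lip\big(h^k(t,\cdot)\big)\le\tfrac{\alpha(5\beta+6)}{4}$ and $\|h^k(t,\cdot)\|_\infty\le2\alpha$.

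It then remains to bound $\lip$ of the product $h^1(t,\mX)\odot h^2(t,\gcn(\mA,\mX))$. Composing $h^2$ with the contractive GCN keeps its Lipschitz constant at most $\tfrac{\alpha(5\beta+6)}{4}$ and its sup-norm at most $2\alpha$. Here the hypothesis is used crucially (and twice): $\alpha(5\beta+6)\le2$ with $\beta\ge0$ forces $6\alpha\le2$, i.e.\ $\alpha\le\tfrac13$, so both factors have sup-norm $\le2\alpha\le1$, and then the product rule collapses to $\lip\big(h^1\odot(h^2\circ\gcn)\big)\le\tfrac{\alpha(5\beta+6)}{4}+\tfrac{\alpha(5\beta+6)}{4}=\tfrac{\alpha(5\beta+6)}{2}$; multiplying by $|\varphi(t)|\le1$ gives $\lip\big(G(t,\cdot,\mA)\big)\le\tfrac{\alpha(5\beta+6)}{2}\le1$ and the argument closes.

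I expect the real obstacle to be the unboundedness of the state: the subterms $r^k(t,\mX)\odot\mX$ and $c^k(t,\mX)-\mX$ are genuinely affine in $\mX$, so the clean constants $\tfrac{5\beta}{4}$ and $2\alpha$ above — and hence the coefficient $5\beta+6$ — are what one gets after normalizing the states to the unit ball, as is customary for these flows. Making this airtight (working on a fixed bounded domain, or carrying the state norm through every product-rule step) and pinning down the exact numerical constants is the delicate part; the composition and product Lipschitz bookkeeping, and the use of the uniform bounds on $\sigmoid$ and $\tanh$, are otherwise routine.
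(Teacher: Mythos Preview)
Your proposal is correct and follows essentially the same route as the paper: the paper cites \cite[Theorem~1]{Bilos2021} for the single-gate bound $\lip(h^k)\le\alpha(\tfrac{5}{4}\beta+\tfrac{3}{2})=\tfrac{\alpha(5\beta+6)}{4}$ (which you rederive step by step), asserts $|h^k|<1$, applies the same product rule to get $\lip(h^1\odot h^2)\le\tfrac{\alpha(5\beta+6)}{2}$, and concludes invertibility. Your write-up is in fact more explicit than the paper's on two points --- you actually justify $\|h^k\|_\infty\le2\alpha<1$ via $\alpha\le\tfrac13$ rather than stating it, and you correctly flag that the constants rely on a normalized (unit-ball) state, an assumption the paper inherits silently from \cite{Bilos2021}.
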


\textbf{Coupling flow.}
For the third design, we use normalizing flows, because they are invertible by definition. An example of the normalizing flow is the coupling flow~\cite{dinh2017}. Let $\{U,V\}$ be a partitioning of the column indices $1,\ldots,d$. With the graph encoder, we extend a usual coupling flow block to the following:
\begin{equation}\label{eqn:coupling.F}
\begin{split}
F(t, \mX, \mA)_U &= \mX_U \odot \exp\Big( \varphi_u(t) \cdot u(t, \mX_V, \widetilde{\mX}_V) \Big)
+ \Big( \varphi_v(t) \cdot v(t, \mX_V, \widetilde{\mX}_V) \Big) \\
F(t, \mX, \mA)_V &= \mX_V,
\qquad\qquad\qquad\qquad\qquad
\widetilde{\mX}_V = \gcn(\mA, \mX_V),
\end{split}
\end{equation}
where
\begin{align*}
  u(t, \mX_V, \widetilde{\mX}_V) &= \mlp^3 \Big( \mlp^1(\mX_V || t) \,\,||\,\, \mlp^2(\widetilde{\mX}_V || t) \Big) \\
  v(t, \mX_V, \widetilde{\mX}_V) &= \mlp^4 \Big(\mlp^1(\mX_V || t) \,\,||\,\, \mlp^2(\widetilde{\mX}_V || t) \Big).
\end{align*}
Here, $\varphi_u$ and $\varphi_v$ are two functions that have a range $[0,1]$ and attain $0$ at the origin. The input $\mX$ is split into $\mX_U$ and $\mX_V$ and the second part goes through the GCN encoder, producing $\widetilde{\mX}_V$. Note that one cannot apply the GCN encoder on the entire $\mX$; otherwise, the $U$ block will have a dependency on itself in the scaling and the shift. The scaling network $u$ and the shift network $v$ are essentially MLPs that share initial layers; they take both $\mX_V$ and $\widetilde{\mX}_V$ as inputs.

\subsection{Learning the Graph}\label{sec:learning}
\ourmethod contains two sets of parameters: the DAG matrix $\mA$ and other parameters of $F$ (call them $\vtheta$), including the flow parameters, the graph encoder parameters, and possibly other parameters (e.g., in latent variable modeling, Appendix~\ref{sec:latent}). Let us use $\tL(\mA, \vtheta)$ to denote the training loss (which could be the quadratic loss in regression models, or likelihood/ELBO loss in latent variable models), making an explicit distinction between $\mA$ and $\vtheta$. Then, the learning problem is:
\begin{equation}\label{eqn:loss}
  \min_{\mA, \vtheta} \quad \tL(\mA, \vtheta) \quad
  \text{s.t. $\mA$ corresponds to a DAG}.
\end{equation}
The DAG constraint is combinatorial, which makes the problem NP-hard~\cite{Chickering2004}. Fortunately, it is known that $\mA$ is a DAG matrix iff $\tr(\expm(\mA \odot \mA)) = n$ or $\tr((\mI+\alpha \mA\odot\mA)^n) = n$ for any $\alpha\ne0$ \cite{Zheng2018,Yu2019}. Hence, \eqref{eqn:loss} becomes an equality-constrained problem over continuous variables, to which the augmented Lagrangian method~\cite{Bertsekas1999} is an effective solution. We discuss the optimization details in Appendix~\ref{sec:training}.

\section{Experiments}
We conduct a comprehensive set of experiments to demonstrate that the proposed graph-based approach effectively improves the performance of time series tasks. The experiments are done on four synthetically generated interacting systems and four real-life datasets. Details of these datasets and their tasks are given in Appendix~\ref{sec:dataset}. In all experiments, we split the data into train, validation, and test sets. We train with early stopping by using Adam and report the results on the test set. Standard errors are obtained by performing five repetitive runs. Hyperparmeter details are given in Appendix~\ref{sec:hyperparameter}. All experiments are conducted on a machine with an Nvidia A100 GPU, 8 CPU cores, and 80GB main memory.

\subsection{Synthetic Systems}
We generate four synthetic systems with the graph size varying from 3 to 30. These systems follow the graph-based equation~\eqref{eqn:problem} or solution~\eqref{eqn:sol}. They are named ``Sink,'' ``Triangle,'' ``Sawtooth,'' and ``Square,'' following those defined in \cite{Bilos2021}; but we add a graph to make the system SEM-like. For example, Triangle is generated by following $F(t,\mX,\mA) = (\mI - \mA^{\top})(\mX + \int_0^t \sign(\sin(u)) \, du)$; see Appendix~\ref{sec:dataset} for other systems and details.

\begin{figure}[ht]
  \centering
  \includegraphics[width=\textwidth]{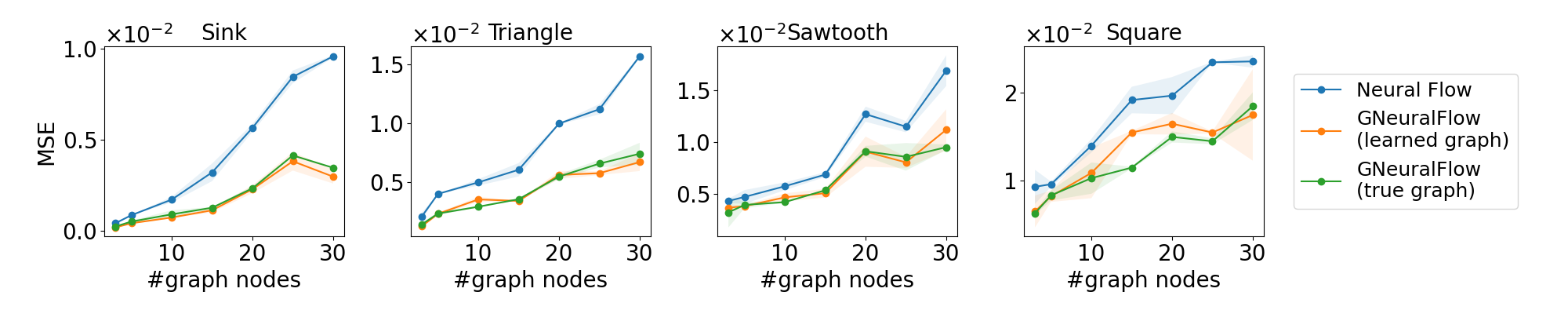}
  \vskip -10pt
  \caption{Comparison with neural flow for forecasting on synthetic systems. (ResNet flow).}
  \label{fig:synth_mse_exp}
\end{figure}

The forecast results are reported in Figure~\ref{fig:synth_mse_exp}. Two observations follow. First, across datasets and across system sizes, \ourmethod lowers the MSE of neural flows. This should not be surprising, since the systems are generated to be interacting and thus standard neural flows that treat the trajectories independently struggle to capture the influence of the graph. Second, \ourmethod performs similarly when the graph is either learned or supplied by the ground truth. This observation indicates that the learned graph sufficiently encodes the interacting nature of the trajectories.

\begin{table}[ht]
  \centering
  \caption{Comparison with non-graph neural flows/ODE, graph ODE, and other time series methods on synthetic systems (5-node graphs).
  Best is \textbf{boldfaced} and second-best is highlighted in \colorbox{lightgray}{gray}.
  }
  \label{tab:graph_ode_model_comparison}
  \small
  \begin{tabular}{clcccc}
    \toprule
    && Sink & Triangle & Sawtooth & Square \\
    && MSE ($\times 10^{-4}$) & MSE ($\times 10^{-3}$) & MSE ($\times 10^{-3}$) & MSE ($\times 10^{-3}$) \\
    \midrule
    \multirow{5}{*}{\begin{turn}{90}\makecell{No\\Graph}\end{turn}}  
    & Neural ODE             & 10.6 \sstd{0.03} & 8.32 \sstd{0.24} & 9.32 \sstd{0.36} & 16.8 \sstd{0.39} \\
    & Neural flow (ResNet)   & 8.41 \sstd{0.05} & 4.01 \sstd{0.52} & 4.73 \sstd{0.06} & 9.61 \sstd{0.02} \\
    & Neural flow (GRU)      & 10.9 \sstd{0.43} & 10.3 \sstd{0.45} & 16.1 \sstd{0.41} & 17.2 \sstd{0.51} \\
    & Neural flow (Coupling) & 9.31 \sstd{0.23} & 12.2 \sstd{0.41} & 14.2 \sstd{0.24} & 13.0 \sstd{0.63} \\
    & GRU-D                  & 12.3 \sstd{0.23} & 11.3 \sstd{0.32} & 17.6 \sstd{0.53} & 18.7 \sstd{0.31} \\
    \midrule
    \multirow{3}{*}{\begin{turn}{90}\makecell{Graph\\ODE}\end{turn}}  
    & GDE                    & 10.4 \sstd{0.20} & 3.99 \sstd{0.05} & 7.65 \sstd{0.03} & 15.89 \sstd{0.81} \\
    & LG-ODE                 & 8.57 \sstd{0.06} & 3.58 \sstd{0.21} & 7.07 \sstd{0.02} & 13.99 \sstd{0.73} \\
    & CF-GODE                & 8.60 \sstd{0.14} & 7.19 \sstd{0.02} & 8.19 \sstd{0.03} & 13.53 \sstd{0.11} \\
    \midrule
    \multirow{2}{*}{\begin{turn}{90}\makecell{Graph\\Learn}\end{turn}}
    & NRI                    & 5.25 \sstd{0.02} & 3.96 \sstd{0.16} & 4.99 \sstd{0.12} & 9.39 \sstd{0.45} \\
    & dNRI                   & 5.40 \sstd{0.04} & 3.39 \sstd{0.09} & 4.97 \sstd{0.21} & 9.78 \sstd{0.21} \\
    \midrule
    \multirow{3}{*}{\begin{turn}{90}\makecell{Our\\Method}\end{turn}}  
    & \ourmethod (ResNet)    & \textbf{3.95} \sstd{0.32} & \textbf{2.32} \sstd{0.11} & \textbf{3.84} \sstd{0.06} & \textbf{8.24} \sstd{0.64} \\
    & \ourmethod (GRU)       & 6.83 \sstd{0.23} & 5.41 \sstd{0.23} & 5.11 \sstd{0.13} & 9.14 \sstd{0.61} \\
    & \ourmethod (Coupling)  & \gcell 4.45 \sstd{0.51} & \gcell3.21 \sstd{0.34} & \gcell4.25 \sstd{0.09} & \gcell8.33 \sstd{0.23} \\
    \bottomrule
  \end{tabular}
\end{table}

We further compare \ourmethod with various baselines, including neural ODE, neural flows, graph ODEs (GDE~\cite{poli2019graph}, LG-ODE~\cite{huang2020learning}, and CF-GODE~\cite{jiang2023cf}), graph learning methods (NRI~\cite{Kipf2017} and dNRI~\cite{Graber2020}), and a non-graph GRU variant for time series (GRU-D~\cite{Che2018}). For graph ODEs, the ground-truth graph is used. For neural flows and \ourmethod, all three flow designs are experimented with. Table~\ref{tab:graph_ode_model_comparison} shows that across all datasets, \ourmethod significantly outperforms the baselines; moreover, \ourmethod also significantly outperforms neural flow for each flow design. These findings indicate that our graph encoder is rather effective and the modeling of conditional dependencies (a DAG structure) is advantageous over that of other graph structures.

With the above encouraging results, we investigate the quality of graph learning. While the learning approach discussed in Section~\ref{sec:learning} works for a generally initialized $\mA$ (as demonstrated by the previous plots), we perform a more in-depth investigation by initializing $\mA$ through perturbing each entry of the ground truth with a zero-mean Gaussian. We evaluate graph quality by using the metrics proposed by~\cite{Zheng2018}: TPR (true positive rate), FDR (false positive rate), FPR (false prediction rate), and SHD (structural Hamming distance).

\begin{figure}[t]
  \centering
  \includegraphics[width=.85\textwidth]{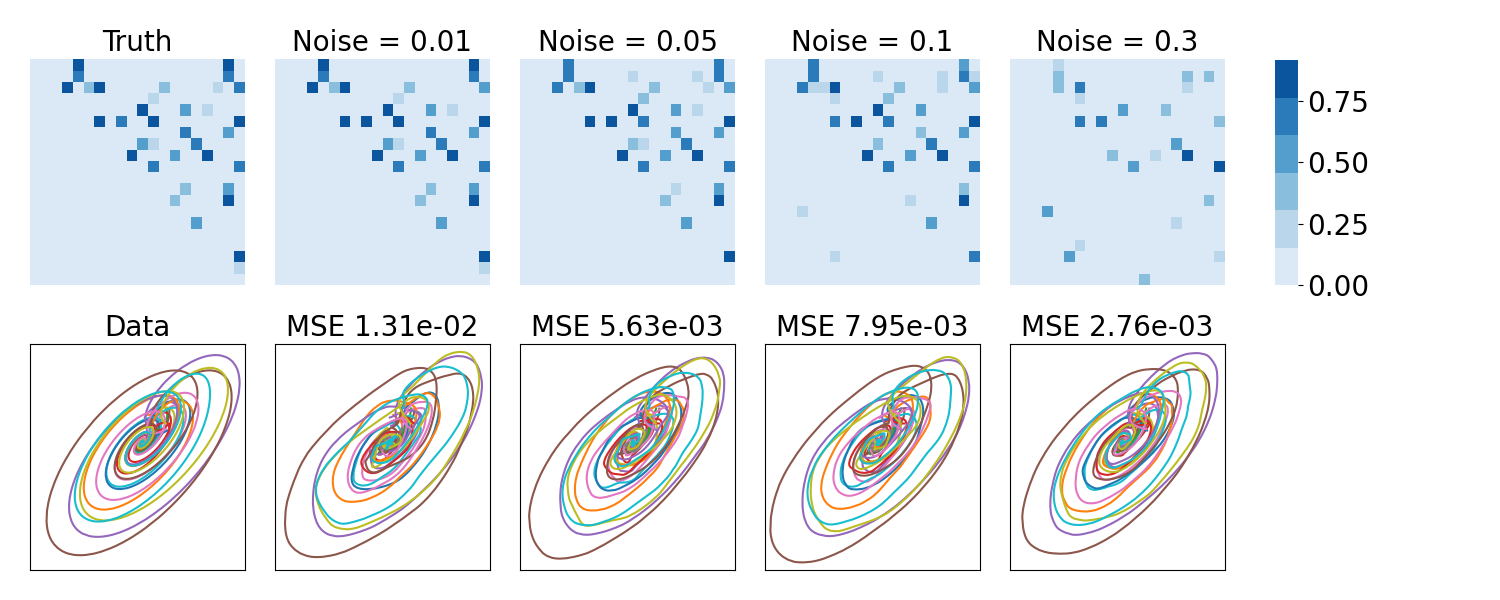} \\
  \includegraphics[width=\textwidth]{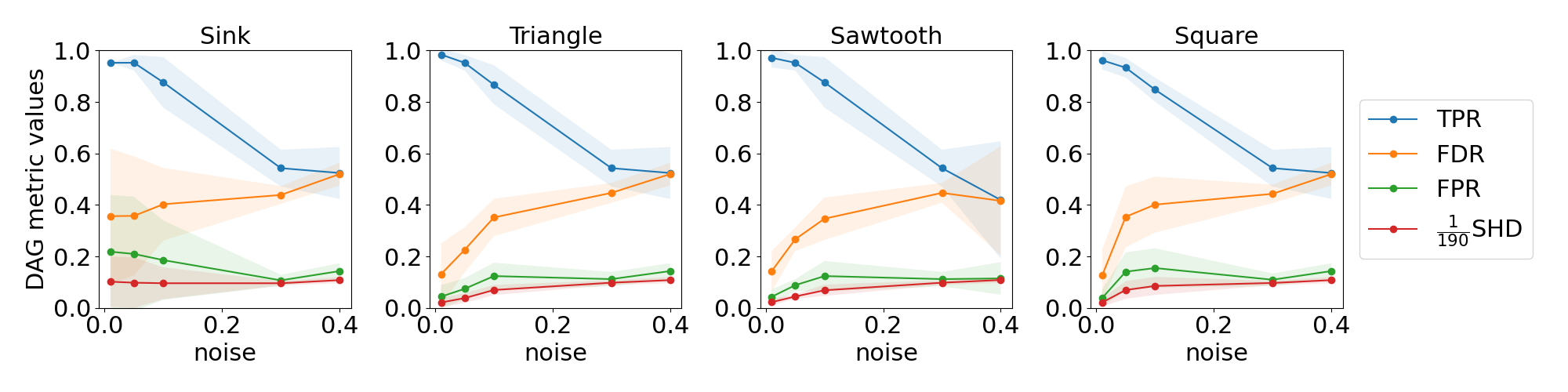}
  \vskip -10pt
  \caption{Graph learning quality and forecast quality. Top two rows: Sink (20 nodes); bottom row: all four datasets (20 nodes).}
  \label{fig:synth_dag_exp}
\end{figure}

Figure~\ref{fig:synth_dag_exp} shows that as the standard deviation of the Gaussian increases, the learned DAG is more and more different from the ground truth, with the TPR decreasing and the FDR, FPR, and SHD increasing. However, the forecast MSE remains relatively flat (in fact, the MSE from the ground truth is slightly higher). Note that with the nonzeros of the ground truth lying between 0 and 1, a Gaussian with standard deviation 0.3 as the initial guess barely carries the signal of the original graph. We have not been able to establish identifiability conditions for $\mA$ from the ODE~\eqref{eqn:problem} as a data generation model; and we suspect that the conditions, if at all exist, may be unrealistically restrictive, given the empirical findings that a better downstream performance can be achieved by a DAG significantly different from the ground truth. However, even though the ground truth is not recovered, a better downstream performance showcases the robust advantage of a graph-based model that intends to capture the complex interplay inside a system.


We also compare the time costs of neural ODE, neural flows, and \ourmethod. Table~\ref{tab:graph_ode_model_comparison_time} indicates that \ourmethod is more expensive than the corresponding neural flow, because of the additional modeling of the graph. However, \ourmethod is more economic than neural ODE, as expected, because it does not run a numerical solver.

\begin{table}[ht]
  \centering
  \caption{Time comparison (in seconds) with neural ODE and neural flows on synthetic systems.}
  \label{tab:graph_ode_model_comparison_time}
  \small
  \begin{tabular}{lcccc}
    \toprule
    & Sink & Triangle & Sawtooth & Square \\
    \midrule
    Neural ODE             & 1.529 & 1.527 & 1.742 & 2.206 \\
    \midrule
    Neural flow (ResNet)   & 1.022 & 1.013 & 1.021 & 1.020 \\
    Neural flow (GRU)      & 0.251 & 0.249 & 0.247 & 0.247 \\
    Neural flow (Coupling) & 0.136 & 0.137 & 0.136 & 0.133 \\
    \midrule
    \ourmethod (ResNet)    & 1.521 & 1.521 & 1.534 & 1.533 \\
    \ourmethod (GRU)       & 0.275 & 0.283 & 0.286 & 0.284 \\
    \ourmethod (Coupling)  & 1.215 & 1.214 & 1.212 & 1.213 \\
    \bottomrule
  \end{tabular}
\end{table}

\subsection{Latent Variable Modeling: Smoothing}
Just like neural ODE and neural flows, \ourmethod can be used for latent variable modeling (see details in Appendix~\ref{sec:latent}). To illustrate the effectiveness of \ourmethod for this application, we first perform experiments with the smoothing approach in this subsection, by using real-life datasets Activity, Physionet, and MuJoCo \cite{Bilos2021}. For Activity, we treat each sensor as a graph node; while for the other two, we treat each feature as a node. The tasks are to reconstruct the time series, to classify the activity at each time step (Activity), and to predict the mortality of patients based on the entire time series (Physionet). We again compare our methods with baselines including neural ODE, neural flows, and graph ODEs. For graph ODEs, we construct a dynamic graph at each time step by utilizing the covariance matrix of the time series data within each batch.

\begin{table}[ht]
  \centering
  \caption{Comparison with non-graph neural flows/ODE and graph ODE methods for the smoothing approach. Left two: classification task; right three: reconstruction.}
  \label{tab:laten_ode_res}
  \small
  \centerline{
  \begin{tabular}{clccccc}
    \toprule
    && Activity & Physionet & Activity & Physionet & MujoCo \\
    && Accuracy & AUC & MSE ($\times 10^{-2}$) & MSE ($\times 10^{-3}$) & MSE ($\times 10^{-3}$) \\
    \midrule
    \multirow{4}{*}{\begin{turn}{90}\makecell{No\\Graph}\end{turn}}
    & ODE-RNN                & 0.785 \sstd{0.003} & 0.781 \sstd{0.004} & 6.050 \sstd{0.10}  & 4.52 \sstd{0.03} & \textbf{2.540} \sstd{0.12} \\
    & Neural flow (ResNet)   & 0.760 \sstd{0.004} & 0.784 \sstd{0.010} & 6.279 \sstd{0.09}  & 4.90 \sstd{0.12} & 8.403 \sstd{0.14} \\
    & Neural flow (GRU)      & 0.783 \sstd{0.008} & 0.788 \sstd{0.008} & 5.837 \sstd{0.07}  & 5.04 \sstd{0.13} & 4.249 \sstd{0.07} \\
    & Neural flow (Coupling) & 0.752 \sstd{0.012} & 0.788 \sstd{0.004} & 6.579 \sstd{0.04}  & 4.86 \sstd{0.07} & 4.217 \sstd{0.14} \\
    \midrule
    \multirow{3}{*}{\begin{turn}{90}\makecell{Graph\\ODE}\end{turn}}
    & GDE                    & 0.721 \sstd{0.014} & 0.757 \sstd{0.010} & 6.491 \sstd{0.011} & 4.83 \sstd{0.38} & 5.220 \sstd{0.42} \\
    & LG-ODE                 & 0.743 \sstd{0.023} & 0.748 \sstd{0.018} & 5.738 \sstd{0.089} & 4.87 \sstd{0.27} & 6.699 \sstd{0.83} \\
    & CG-ODE                 & 0.768 \sstd{0.048} & 0.783 \sstd{0.082} & 6.241 \sstd{0.012} & 4.73 \sstd{0.07} & 4.312 \sstd{0.17} \\
    \midrule
    \multirow{3}{*}{\begin{turn}{90}\makecell{Our\\Method}\end{turn}}
    & \ourmethod (ResNet)    & 0.786 \sstd{0.009} & 0.800 \sstd{0.009} & 5.947 \sstd{0.03}  & \gcell 4.31 \sstd{0.06} & \gcell 2.916 \sstd{0.21} \\
    & \ourmethod (GRU)       & \gcell 0.804 \sstd{0.003} & \textbf{0.812} \sstd{0.001} & \textbf{5.169} \sstd{0.05} & \textbf{4.23} \sstd{0.15} & 4.112 \sstd{0.13} \\
    & \ourmethod (Coupling)  & \textbf{0.808} \sstd{0.005} & \gcell 0.808 \sstd{0.008} & \gcell 5.431 \sstd{0.10} & 4.59 \sstd{0.23} & 3.849 \sstd{0.07} \\
    \bottomrule
  \end{tabular}}
\end{table}

From Table~\ref{tab:laten_ode_res}, we see that \ourmethod generally performs the best compared with the various baselines. Moreover, by using the same flow design, \ourmethod is always better than neural flows. These findings are consistent with those in the synthetic data case, demonstrating a good utility of our method for real-life applications.

\subsection{Latent Variable Modeling: Filtering}
We also perform experiments with the filtering approach, on the MIMIC-IV dataset~\cite{Bilos2021}. We treat each feature as a node to set up a graph of 97 longitudinal features, including lab tests, outputs, and prescriptions in clinical events. This graph is the largest among all experiments in this paper.

Table~\ref{tab:gru_ode} reports the forecast error on the next three time points and the estimated likelihood of the time series. We see that \ourmethod performs the best with the GRU flow design, while some graph ODE approaches come second. Moreover, \ourmethod consistently outperforms neural flows, across flow architecture designs and evaluation metrics. The improvement over corresponding neural flow versions is at least the sum of the standard errors of the two compared methods, which are demonstratively significant.

\begin{table}[ht]
  \centering
  \caption{Comparison with non-graph neural flows/ODE and graph ODE methods for the filtering approach. For both metrics, the lower the better.}
  \label{tab:gru_ode}
  \small
  \begin{tabular}{llcc}
    \toprule
    && MSE & NLL \\
    \midrule
    \multirow{4}{*}{\begin{turn}{90}\makecell{No\\Graph}\end{turn}}
    & GRU-ODE-Bayes          & 0.379 \sstd{0.005}  & 0.748 \sstd{0.045} \\
    & Neural flow (ResNet)   & 0.379 \sstd{0.005}  & 0.774 \sstd{0.059} \\
    & Neural flow (GRU)      & 0.364 \sstd{0.008}  & 0.734 \sstd{0.054} \\
    & Neural flow (Coupling) & 0.366 \sstd{0.002}  & 0.675 \sstd{0.003} \\
    \midrule
    \multirow{3}{*}{\begin{turn}{90}\makecell{Graph\\ODE}\end{turn}}
    & GDE                    & \gcell 0.342 \sstd{0.001} &0.657 \sstd{0.007} \\
    & LG-ODE                 & 0.349 \sstd{0.002}  & \gcell0.649 \sstd{0.005} \\
    & CG-ODE                 & 0.372 \sstd{0.011}	 & 0.825 \sstd{0.018} \\
    \midrule
    \multirow{3}{*}{\begin{turn}{90}\makecell{Our\\Method}\end{turn}}
    & \ourmethod (ResNet)    & 0.356 \sstd{0.0007} & 0.663 \sstd{0.008} \\
    & \ourmethod (GRU)       & \textbf{0.335} \sstd{0.003} & \textbf{0.606} \sstd{0.001} \\
    & \ourmethod (Coupling)  & 0.350 \sstd{0.004}  & 0.662 \sstd{0.008} \\
    \bottomrule
  \end{tabular}
\end{table}

\section{Conclusions and Discussions}
In this work, we address the challenge of learning the systemic interactions of time series, by proposing a graph-based model \ourmethod and learning the graph structure in tandem with the system dynamics. \ourmethod is a continuous-time model, which can be used for irregularly sampled time series. Moreover, the systemic interactions are modeled by a conditional dependence structure. We apply \ourmethod to latent variable modeling and demonstrate that incorporating the DAG structure improves time series classification and forecasting noticeably.

Several time-series approaches do not learn a static graph but a dynamic one~\cite{Graber2020}. Such a graph is interpreted as a latent structure, which varies depending on past data. In contrast, our approach learns an explicit structure that governs the dynamics over time. Nevertheless, our mathematical framework can be straightforwardly adapted to learning a time-varying latent graph, if desired. To achieve so, we reuse the loss calculation in~\eqref{eqn:loss}, remove the constraint, and parameterize $\mA$ as a function of $\mX(t)$. This way, we sacrifice the DAG interpretation of the interactions but gain a time-dependent graph.

A limitation of the proposed model is that the number of parameters on the $\mA$ part grows quadratically with the number of time series (nodes). Hence, this part of the computational cost can be cubic, because the evaluation of the DAG constraint and the gradient involves the computation of the matrix exponential. Such a scalability challenge is a common problem for DAG structure learning. While past research showcased the feasibility of learning a graph with a few hundred nodes~\cite{Yu2019}, going beyond is generally believed to require either a new computational technique or a new modeling approach. One potential direction is to introduce structures into $\mA$ (such as low-rankness~\cite{Fang2024}), which admit faster matrix evaluation.

\begin{ack}
GM acknowledges support from the Engineering and Physical Sciences Research Council (EPSRC) and the BBC under iCASE.
AF is partially funded by the CRUK National Biomarker Centre, by the Manchester Experimental Cancer Medicine Centre and the NIHR Manchester Biomedical Research Centre.
JC is supported by the MIT-IBM Watson AI Lab.
\end{ack}

\bibliographystyle{plainnat}
\bibliography{reference}

\newpage
\appendix

\section{Supporting Code}
Code is available at \url{https://github.com/gmerca/GNeuralFlow}.

\section{Proofs}

\begin{proof}[Proof of Theorem~\ref{thm:GCN}]
  When $\mA$ is a DAG adjacency matrix, its symmetrization $\mB$ is the adjacency matrix of the corresponding undirected graph. We call the DAG an orientation of the undirected graph. The Gershgorin circle theorem asserts that the spectral radius of $\mB$, $\rho(\mB)$, is bounded by $\gamma$. Meanwhile, \cite{Hoppen2019} show that the spectral norm $\|\mA\|_2 \le \rho(\mB)$. Then, $\|\widehat{\mA}\|_2 \le 1 + \gamma/\gamma = 2$.
\end{proof}

\begin{proof}[Proof of Theorem~\ref{thm:gru.flow}]
  We follow the proof of \cite[Theorem 1]{Bilos2021} (see A.3 of the paper), which concludes that $|h(x)-h(y)| \le \alpha(\frac{5}{4}\beta+\frac{3}{2}) |x-y|$. Since GCN is contractive, such an inequality applies to both $h=h^1$ and $h=h^2$. Then, applying Eqn (14) of the paper,
  \[
  |h^1(x)h^2(x) - h^1(y)h^2(y)|
  < \big[ \underbrace{|h^1(x)|}_{< 1} \cdot \underbrace{\text{Lip}(h^2)}_{< \alpha(\frac{5}{4}\beta+\frac{3}{2})} +
    \underbrace{|h^2(x)|}_{< 1} \cdot \underbrace{\text{Lip}(h^1)}_{< \alpha(\frac{5}{4}\beta+\frac{3}{2})} \big] |x-y|
  \textstyle
  < 2\alpha(\frac{5}{4}\beta+\frac{3}{2}) |x-y|.
  \]
  Therefore, when $\alpha(5\beta+6)\le2$, the product $h^1 h^2$ is contractive and therefore $F$ is invertible.
\end{proof}

\section{Details of Example in Section~\ref{sec:example}}\label{sec:example.app}
The matrix is
\[
\mB = \begin{bmatrix} -4 & 5 \\ -3 & 1 \end{bmatrix},
\]
and the initial conditions are
\[
\vx^1_0 = \begin{bmatrix} 0.6 \\ 0.5 \end{bmatrix}, \quad
\vx^2_0 = \begin{bmatrix} 0.7 \\ 0.1 \end{bmatrix}, \quad
\vx^3_0 = \begin{bmatrix} 0.2 \\ 0.3 \end{bmatrix}.
\]

\section{Training Method}\label{sec:training}
In this section, we briefly describe the augmented Lagrangian method for solving the equality-constrained problem
\begin{align*}
  \min_{\mA, \vtheta} \quad& \tL(\mA, \vtheta) \\
  \text{s.t.} \quad& h(\mA) = 0,
\end{align*}
where the constraint can either be $h(\mA) = \tr(\expm(\mA \odot \mA)) - n$ or $h(\mA) = \tr((\mI+\alpha \mA\odot\mA)^n) - n$.

Define the augmented Lagrangian
\begin{equation}\label{eqn:augmented.lagrangian}
\tL_c = \tL(\mA, \vtheta) + \lambda h(\mA) + \frac{c}{2}|h(\mA)|^2,
\end{equation}
where $\lambda$ and $c$ denote the Lagrange multiplier and the penalty parameter, respectively. The general idea of the method is to gradually increase the penalty parameter to ensure that the constraint is eventually satisfied. Over iterations, $\lambda$ as a dual variable will converge to the Lagrange multiplier of the original problem. The upate rule at the $k$th iteration reads
\begin{align*}
  \mA^k, \vtheta^k &= \argmin_{\mA,\vtheta} \, \tL_{c^k} \\
  \lambda^{k+1} &= \lambda^k + c^k h(\mA^k) \\
  c^{k+1} &= \begin{cases}
    \eta c^k & \text{if } |h(\mA^k)| > \gamma |h(\mA^{k-1})| \\
    c^k & \text{else},
  \end{cases}
\end{align*}
where $\eta \in (1,+\infty)$ and $\gamma \in (0,1)$ are hyperparameters to be tuned.

The subproblem of optimizing $\mA$ and $\vtheta$ can be solved by using the Adam optimizer. It requires the gradient of $\tL_c$ and hence of $h$. For $h(\mA) = \tr(\expm(\mA \odot \mA)) - n$, it can be derived that $\nabla h(\mA) = \expm(\mA \odot \mA)^{\top} \odot 2\mA$, which can be obtained virtually for free after $h$ has been evaluated. For $h(\mA) = \tr((\mI+\alpha \mA\odot\mA)^n) - n$, one may use automatic differentiation to obtain the gradient.

Algorithm~\ref{algo:augmented.lagrangian} summarizes the training procedure. Note that an effective initialization of $\mA$ would use an empty diagonal. Moreover, in every update of $\mA$, one may keep its diagonal zero throughout.

\begin{algorithm}[h]
  \caption{Training algorithm of \ourmethod}
  \label{algo:augmented.lagrangian}
  \begin{algorithmic}[1]
    \State Initialize $c \gets 1$ and $\lambda \gets 0$
    \For{$k=0,1,2,\ldots$}
    \State Compute $\mA^k$ and $\vtheta^k$ as a minimizer of~\eqref{eqn:augmented.lagrangian} by using the Adam optimizer
    \State Update Lagrange multiplier $\lambda \gets \lambda + c h(\mA^k)$
    \If{$k>0$ and $|h(\mA^k)|>\gamma|h(\mA^{k-1})|$}
    \State $c \gets \eta c$
    \EndIf
    \If{$|h(\mA^k)| < \text{threshold}$}
    \State break
    \EndIf
    \EndFor
  \end{algorithmic}
\end{algorithm}

\section{Handling Missing Data}\label{sec:missing}
At a particular time $t$, some rows of the observed data $\mX(t)$ may be empty (i.e., measurements of some time series at time $t$ are missing). In this case, one evaluates the graph neural flow $F$ on a subgraph of present measurements. Rather than extracting this subgraph and the corresponding rows of $\mX$, the GCN encoder~\eqref{eqn:GCN} offers a convenient approach for evaluation: masking. This approach is particularly favorable in batching, because tensor dimensions do not change. In particular, we mask out (i.e., setting zero) the part of $\mA$ corresponding to missing data. Then, for the output $\widetilde{\mX}$, the part corresponding to present data is correctly calculated, while the part of missing data becomes zero and this condition is invariant across layers. Note that the GCN layers must not have bias terms to maintain this invariance.

\section{\ourmethod for Latent Variable Modeling}\label{sec:latent}
In addition to straightforwardly modeling the data space, neural flows find successful use in the latent space. Here, we discuss two popular approaches in latent variable modeling and how \ourmethod can be incorporated.

Of particular consideration is the role of the graph. One may straightforwardly extend~\cite{Bilos2021} by using the flow in the latent/hidden space; however, this method models the interactions among the latent/hidden dimensions, which are less interpretable than those among the time series. Hence, we propose to use the flow on an augmented space, part of which carries the graph information, as a new design complementary to those proposed in Section~\ref{sec:parameterization}.

On a high level, smoothing~\cite{Rubanova2019} and filtering~\cite{Brouwer2019} approaches use a neural ODE or a neural flow to continuously evolve the hidden state from time $t_{j-1}$ (denoted as $\mH(t_{j-1})$) to time $t_j$ (denoted as $\mH'(t_j)$); and then use an RNN to introduce a jump (denoted as $\mH(t_j)$) on observing input data $\mX(t_j)$. To model and learn the interaction graph in the data space, we use the graph encoder~\eqref{eqn:GCN} to produce a transformed data $\widetilde{\mX}(t_j)$ and use a second RNN to introduce the paired jump $\widetilde{\mH}(t_j)$ given $\widetilde{\mX}(t_j)$. Then, the pair of hidden states, $\mH(t_j)$ and $\widetilde{\mH}(t_j)$, are concatenated and a standard neural flow evolve the concatenated state to the next time point, the result of which is then projected to the proper hidden dimension. The smoothing and filtering approaches differ in fine details, including different uses of the RNNs and hidden states. As a result, the loss function $\tL$ in~\eqref{eqn:loss} is also different. Details are presented in the following.

\subsection{Smoothing Approach}
Given observation data $\mX(t_0),\ldots,\mX(t_N)$, this approach produces a latent quantity $\mZ_0$ by a combined use of LSTM and neural flow, and then traces out a smooth curve $\mZ(t)$ using another flow, taking $\mZ(t_0)=\mZ_0$ as the initial condition. Then, the observation data $\mX(t_j)$ is recovered from $\mZ(t_j)$.

A VAE is used to set up the training loss. The decoder $p(\mX(t_0), \ldots, \mX(t_N) | \mZ_0)$ is factorized as
\[
p(\mX(t_0), \ldots, \mX(t_N) | \mZ_0) = \prod_{j=0}^N p(\mX(t_j) | \mZ(t_j)),
\]
where each $\mZ(t_j)$ is computed by running a standard neural flow: $\mZ(t_j) = F(t_j, \mZ_0)$. The encoder, on the other hand, produces a latent Gaussian $\mZ_0$ with mean $\vmu$ and diagonal covariance $\diag(\vsigma)$; that is,
\[
q(\mZ_0 | \mX(t_0), \ldots, \mX(t_N)) = \tN(\mZ_0 \,|\, \vmu, \diag(\vsigma)), \quad
[\vmu, \log\vsigma] = g(\mH(t_N)),
\]
where $\mH(t_N)$ is the hidden state to be elaborated soon and $g$ is a neural network projection.%
\footnote{When the encoder is run backward in time, one uses $\mH(t_0)$ instead of $\mH(t_0)$.}
The ELBO loss for the VAE is
\begin{multline*}
  \tL_{\text{ELBO}} = D_{\text{KL}} \Big( q(\mZ_0 | \mX(t_0), \ldots, \mX(t_N)) \,||\, p(\mZ_0) \Big) \\
  - \mean_{ \mZ_0 \sim q(\mZ_0 | \mX(t_0), \ldots, \mX(t_N)) } \Big[ \log p(\mX(t_0), \ldots, \mX(t_N) | \mZ_0) \Big].
\end{multline*}

Our \ourmethod uses a pair of LSTMs together with another neural flow to evolve the hidden state. Specifically, we maintain a pair of states $\mH(t)$ and $\widetilde{\mH}(t)$, the latter of which includes the graph information. At time $t_{j-1}$, we concatenate $\mH(t_{j-1})$ and $\widetilde{\mH}(t_{j-1})$, run the neural flow $F$ to evolve the concatenated state to time $t_j$, and apply a projection $g_{\text{proj}}$ so that the net result $\mH'(t_j)$ remains in the same dimension as $\mH(t_{j-1})$:
\[
\mH'(t_j) = g_{\text{proj}}( \, F( t_j, \, \mH(t_{j-1}) || \widetilde{\mH}(t_{j-1})) \,).
\]
Then, we run a pair of LSTMs to obtain the paired states at time $t_j$:
\[
\mH(t_j) = \lstm^1 \Big( \mH'(t_j), \,\, \mX(t_j) \Big), \quad
\widetilde{\mH}(t_j) = \lstm^2 \Big( \mH'(t_j), \,\, \widetilde{\mX}(t_j) \Big),
\]
where the second LSTM is applied to the transformed observation data $\widetilde{\mX}(t_j)$ produced by the GCN encoder~\eqref{eqn:GCN}. By doing so, the graph models the interaction inside the data $\mX(t)$ rather than the hidden states $\mH(t)$.

\subsection{Filtering Approach}
As opposed to the preceding approach, the filtering approach uses only a decoder. Each time, it first evolves the hidden state to $\mH'(t_j)$ and then runs a GRU to update the hidden state to $\mH(t_j)$. This approach maintains two Gaussians, the first one models the observation $\mX(t_j)$:
\[
\tN( \mX(t_j) \,|\, \vmu_{\text{obs}}^j, \diag(\vsigma_{\text{obs}}^j)), \quad
[\vmu_{\text{obs}}^j, \log\vsigma_{\text{obs}}^j] = g_{\text{obs}}(\mH'(t_j)),
\]
while the second one models the jump caused by the GRU:
\[
\tN(\vmu_{\text{post}}^j, \diag(\vsigma_{\text{post}}^j)), \quad
[\vmu_{\text{post}}^j, \log\vsigma_{\text{post}}^j] = g_{\text{post}}(\mH(t_j)).
\]
The training loss aims at maximizing the observation data likelihood while minimizing the KL divergence of the two Gaussians:
\[
\tL =
-\sum_{j=1}^N \log \tN( \mX(t_j) \,|\, \vmu_{\text{obs}}^j, \diag(\vsigma_{\text{obs}}^j))
+\lambda D_{\text{KL}} \Big( \tN(\vmu_{\text{obs}}^j, \diag(\vsigma_{\text{obs}}^j)) \,||\, \tN(\vmu_{\text{post}}^j, \diag(\vsigma_{\text{post}}^j)) \Big).
\]

Our \ourmethod uses a pair of GRUs together with a standard neural flow to evolve the hidden state. Specifically, we maintain a pair of states $\mH(t)$ and $\widetilde{\mH}(t)$, the latter of which includes the graph information. At time $t_{j-1}$, we concatenate $\mH(t_{j-1})$ and $\widetilde{\mH}(t_{j-1})$, run the neural flow $F$ to evolve the concatenated state to time $t_j$, and apply a projection $g_{\text{proj}}$ so that the net result $\mH'(t_j)$ remains in the same dimension as $\mH(t_{j-1})$:
\[
\mH'(t_j) = g_{\text{proj}}( \, F( t_j, \, \mH(t_{j-1}) || \widetilde{\mH}(t_{j-1}) ) \,).
\]
Then, we run a pair of GRUs to obtain the paired states at time $t_j$:
\[
\mH(t_j) = \gru^1 \Big( \mH'(t_j), \,\, g_{\text{prep}}(\mX(t_j), \mH'(t_j)) \Big), \quad
\widetilde{\mH}(t_j) = \gru^2 \Big( \mH'(t_j), \,\, g_{\text{prep}}(\widetilde{\mX}(t_j), \mH'(t_j)) \Big),
\]
where the second GRU is applied to the transformed observation data $\widetilde{\mX}(t_j)$ produced by the GCN encoder~\eqref{eqn:GCN}. By doing so, the graph models the interaction inside the data $\mX(t)$ rather than the hidden states $\mH(t)$.

\section{Datasets and Tasks}\label{sec:dataset}

\begin{table}[ht]
  \caption{Experiment settings and datasets.}
  \label{tab:datasets}
  \small
  \centering
  \begin{tabular}{ccccccc}
    \toprule
    Dataset & Method & Tasks \& Metrics & \#Nodes ($n$) & \#Times ($N$) & \#Samples & Split \\
    \midrule
    Synthetic & regression & forecast MSE            & 5--30 & 500 & 1000  & 60:20:20 \\
    Synthetic & regression & graph metrics           & 15    & 500 & 1000  & 60:20:20 \\
    \midrule
    Activity  & smoothing  & reconstruction MSE      & 4     & 50  & 6554  & 75:5:20 \\
              &            & classification accuracy \\
    Physionet & smoothing  & reconstruction MSE      & 41    & 52  & 8000  & 60:20:20 \\
              &            & classification AUC \\
    MuJoCo    & smoothing  & reconstruction MSE      & 14    & 100 & 10000 & 60:20:20 \\
    \midrule
    MIMIC-IV  & filtering  & forecast MSE            & 97    & 19  & 17874 & 70:15:15 \\
              &            & log-likelihood \\
    \bottomrule
  \end{tabular}
\end{table}

The datasets used in this paper include four synthetic ODE systems and four real-life datasets. Table~\ref{tab:datasets} summarizes the basic information of these datasets, tasks, evaluation metrics, and learning methods.

\subsection{Synthetic Datasets}

We define four interacting systems based on either the ODE $\dot{\mX} = f(t, \mX, \mA)$ or the solution $\mX(t) = F(t, \mX_0, \mA)$:
\begin{itemize}
\item Sink (2D): $f(t,\mX,\mA) = (\mI - \mA^{\top})\mX\mB^{\top}$ where $\mB = \left[ \begin{smallmatrix} -4 & 10 \\ -3 & 2 \end{smallmatrix} \right]$
\item Triangle (1D): $F(t,\mX,\mA) = (\mI - \mA^{\top})(\mX + \int_0^t \sign(\sin(u)) \, du)$
\item Sawtooth (1D): $F(t,\mX,\mA) = (\mI - \mA^{\top})(\mX + t - \lfloor t \rfloor)$
\item Square (1D): $F(t,\mX,\mA) = (\mI - \mA^{\top})(\mX + \sign(\sin(t)))$
\end{itemize}
For Sink, $\mX \in \real^{n \times 2}$; while for the other three systems, $\mX \in \real^{n \times 1}$, where $n$ is the number of trajectories (graph nodes) in the system. The initial condition $\mX_0$ is uniformly sampled from $[0,1]^{n\times2}$ for Sink, and from $[-2,2]^{n\times1}$ for the other three systems. The time interval is $[0,10]$ and the time points are uniformly random.

The DAG adjacency matrix is generated by using the following procedure:
\begin{enumerate}
\item Generate a sparse $n \times n$ matrix $\mA$ with a pre-defined density, where the nonzero locations are random and the nonzero values are uniformly random.
\item Keep only the strict upper triangular part of $\mA$ (i.e., diagonal is zero).
\item Perform symmetric row/column permutation on $\mA$.
\end{enumerate}

The task is to predict the trajectories $\mX(t)$ given $\mX_0$.

\subsection{Real-Life Datasets}
We use four real-life datasets preprocessed by \cite{Bilos2021}.

\underline{Activity}~\cite{Rubanova2019} contains time series recorded by four sensors, on individuals performing various activities: walking, sitting, lying, etc. The task is to classify the activities at each time point. Additionally, since the smoothing approach for latent variable modeling reconstructs the time series, we also evaluate different models on the reconstruction quality. We treat each sensor as one graph node.

\underline{Physionet}~\cite{Silva2010} contains time series of patients' measurements (37 variables in total) from the first 48 hours after being admitted to ICU. The task is to predict the mortality of the patients. Additionally, since the smoothing approach for latent variable modeling reconstructs the time series, we also evaluate different models on the reconstruction quality. We treat each variable as one graph node.

\underline{MuJoCo}~\cite{Tassa2018} contains physics simulations by randomly sampling initial positions and velocities and letting the dynamics evolve deterministically in time. Each sequence includes 14 features. We treat each feature as one graph node. We evaluate different models on the reconstruction quality.

\underline{MIMIC-IV}~\cite{Goldberger2000,Johnson2021} contains time series of ICU patients' measurements, including their vital signs, laboratory test results, medication, and any output data during their ICU stay (97 variables in total). The task is to predict the next three measurements in the 12 hour interval after the observation window of 36 hours. We treat each variable as one graph node.

\section{Hyperparameter Details} \label{sec:hyperparameter}

\begin{table}[ht]
  \caption{Graph learning hyperparameters.}
  \label{tab:dag_hyperparams}
  \small
  \centering
  \begin{tabular}{ccc}
    \multicolumn{3}{c}{Synthetic systems} \\
    \multicolumn{3}{c}{(all architectures)} \\
    \toprule
    \# points & $\eta$ & $\gamma$ \\
    \midrule
    3 & 3 & 0.3 \\
    5 & 5 & 0.25 \\
    15 & 7 & 0.21 \\
    20 & 7 & 0.19 \\
    25 & 7 & 0.19 \\
    30 & 7 & 0.16 \\
    \bottomrule
  \end{tabular}%
  \hspace{20pt}%
  \begin{tabular}{ccccccc}
    \multicolumn{7}{c}{Real-life datasets} \\
    \toprule
    & \multicolumn{2}{c}{ResNet}
    & \multicolumn{2}{c}{GRU}
    & \multicolumn{2}{c}{Coupling} \\
    & $\eta$ & $\gamma$ & $\eta$ & $\gamma$ & $\eta$ & $\gamma$ \\
    \midrule
    Activity  & 7 & 0.21 & 15 & 0.21 & 7 & 0.21 \\
    Physionet & 10 & 0.5 & 15 & 0.5 & 10 & 0.5 \\
    MuJoCo    & 15 & 0.5 & 10 & 0.5 & 15 & 0.5 \\
    MIMIC-IV  & 10 & 0.15 & 10 & 0.15 & 10 & 0.15 \\
    \bottomrule
  \end{tabular}
\end{table}

We reuse the architecture parameters and training hyperparameters in \cite{Bilos2021} and only tune the graph learning hyperparameters (see Algorithm~\ref{algo:augmented.lagrangian} in Section~\ref{sec:training}). Table~\ref{tab:dag_hyperparams} lists the tuned values.

\clearpage
\section*{NeurIPS Paper Checklist}

\begin{enumerate}

\item {\bf Claims}
    \item[] Question: Do the main claims made in the abstract and introduction accurately reflect the paper's contributions and scope?
    \item[] Answer: \answerYes{}
    \item[] Justification: The contributions of the paper are summarized in the introduction section and elaborated in the following sections.
    \item[] Guidelines:
    \begin{itemize}
        \item The answer NA means that the abstract and introduction do not include the claims made in the paper.
        \item The abstract and/or introduction should clearly state the claims made, including the contributions made in the paper and important assumptions and limitations. A No or NA answer to this question will not be perceived well by the reviewers. 
        \item The claims made should match theoretical and experimental results, and reflect how much the results can be expected to generalize to other settings. 
        \item It is fine to include aspirational goals as motivation as long as it is clear that these goals are not attained by the paper. 
    \end{itemize}

\item {\bf Limitations}
    \item[] Question: Does the paper discuss the limitations of the work performed by the authors?
    \item[] Answer: \answerYes{}
    \item[] Justification: The limitations of the work are discussed in the concluding section.
    \item[] Guidelines:
    \begin{itemize}
        \item The answer NA means that the paper has no limitation while the answer No means that the paper has limitations, but those are not discussed in the paper. 
        \item The authors are encouraged to create a separate "Limitations" section in their paper.
        \item The paper should point out any strong assumptions and how robust the results are to violations of these assumptions (e.g., independence assumptions, noiseless settings, model well-specification, asymptotic approximations only holding locally). The authors should reflect on how these assumptions might be violated in practice and what the implications would be.
        \item The authors should reflect on the scope of the claims made, e.g., if the approach was only tested on a few datasets or with a few runs. In general, empirical results often depend on implicit assumptions, which should be articulated.
        \item The authors should reflect on the factors that influence the performance of the approach. For example, a facial recognition algorithm may perform poorly when image resolution is low or images are taken in low lighting. Or a speech-to-text system might not be used reliably to provide closed captions for online lectures because it fails to handle technical jargon.
        \item The authors should discuss the computational efficiency of the proposed algorithms and how they scale with dataset size.
        \item If applicable, the authors should discuss possible limitations of their approach to address problems of privacy and fairness.
        \item While the authors might fear that complete honesty about limitations might be used by reviewers as grounds for rejection, a worse outcome might be that reviewers discover limitations that aren't acknowledged in the paper. The authors should use their best judgment and recognize that individual actions in favor of transparency play an important role in developing norms that preserve the integrity of the community. Reviewers will be specifically instructed to not penalize honesty concerning limitations.
    \end{itemize}

\item {\bf Theory Assumptions and Proofs}
    \item[] Question: For each theoretical result, does the paper provide the full set of assumptions and a complete (and correct) proof?
    \item[] Answer: \answerYes{}
    \item[] Justification: Assumptions are given in the theorems and proofs are given in the appendix.
    \item[] Guidelines:
    \begin{itemize}
        \item The answer NA means that the paper does not include theoretical results. 
        \item All the theorems, formulas, and proofs in the paper should be numbered and cross-referenced.
        \item All assumptions should be clearly stated or referenced in the statement of any theorems.
        \item The proofs can either appear in the main paper or the supplemental material, but if they appear in the supplemental material, the authors are encouraged to provide a short proof sketch to provide intuition. 
        \item Inversely, any informal proof provided in the core of the paper should be complemented by formal proofs provided in appendix or supplemental material.
        \item Theorems and Lemmas that the proof relies upon should be properly referenced. 
    \end{itemize}

    \item {\bf Experimental Result Reproducibility}
    \item[] Question: Does the paper fully disclose all the information needed to reproduce the main experimental results of the paper to the extent that it affects the main claims and/or conclusions of the paper (regardless of whether the code and data are provided or not)?
    \item[] Answer: \answerYes{}
    \item[] Justification: Experiment information is provided in part in the main text and in part in the appendix.
    \item[] Guidelines:
    \begin{itemize}
        \item The answer NA means that the paper does not include experiments.
        \item If the paper includes experiments, a No answer to this question will not be perceived well by the reviewers: Making the paper reproducible is important, regardless of whether the code and data are provided or not.
        \item If the contribution is a dataset and/or model, the authors should describe the steps taken to make their results reproducible or verifiable. 
        \item Depending on the contribution, reproducibility can be accomplished in various ways. For example, if the contribution is a novel architecture, describing the architecture fully might suffice, or if the contribution is a specific model and empirical evaluation, it may be necessary to either make it possible for others to replicate the model with the same dataset, or provide access to the model. In general. releasing code and data is often one good way to accomplish this, but reproducibility can also be provided via detailed instructions for how to replicate the results, access to a hosted model (e.g., in the case of a large language model), releasing of a model checkpoint, or other means that are appropriate to the research performed.
        \item While NeurIPS does not require releasing code, the conference does require all submissions to provide some reasonable avenue for reproducibility, which may depend on the nature of the contribution. For example
        \begin{enumerate}
            \item If the contribution is primarily a new algorithm, the paper should make it clear how to reproduce that algorithm.
            \item If the contribution is primarily a new model architecture, the paper should describe the architecture clearly and fully.
            \item If the contribution is a new model (e.g., a large language model), then there should either be a way to access this model for reproducing the results or a way to reproduce the model (e.g., with an open-source dataset or instructions for how to construct the dataset).
            \item We recognize that reproducibility may be tricky in some cases, in which case authors are welcome to describe the particular way they provide for reproducibility. In the case of closed-source models, it may be that access to the model is limited in some way (e.g., to registered users), but it should be possible for other researchers to have some path to reproducing or verifying the results.
        \end{enumerate}
    \end{itemize}

\item {\bf Open access to data and code}
    \item[] Question: Does the paper provide open access to the data and code, with sufficient instructions to faithfully reproduce the main experimental results, as described in supplemental material?
    \item[] Answer: \answerYes{}
    \item[] Justification: Code will be released upon publication of the paper.
    \item[] Guidelines:
    \begin{itemize}
        \item The answer NA means that paper does not include experiments requiring code.
        \item Please see the NeurIPS code and data submission guidelines (\url{https://nips.cc/public/guides/CodeSubmissionPolicy}) for more details.
        \item While we encourage the release of code and data, we understand that this might not be possible, so “No” is an acceptable answer. Papers cannot be rejected simply for not including code, unless this is central to the contribution (e.g., for a new open-source benchmark).
        \item The instructions should contain the exact command and environment needed to run to reproduce the results. See the NeurIPS code and data submission guidelines (\url{https://nips.cc/public/guides/CodeSubmissionPolicy}) for more details.
        \item The authors should provide instructions on data access and preparation, including how to access the raw data, preprocessed data, intermediate data, and generated data, etc.
        \item The authors should provide scripts to reproduce all experimental results for the new proposed method and baselines. If only a subset of experiments are reproducible, they should state which ones are omitted from the script and why.
        \item At submission time, to preserve anonymity, the authors should release anonymized versions (if applicable).
        \item Providing as much information as possible in supplemental material (appended to the paper) is recommended, but including URLs to data and code is permitted.
    \end{itemize}

\item {\bf Experimental Setting/Details}
    \item[] Question: Does the paper specify all the training and test details (e.g., data splits, hyperparameters, how they were chosen, type of optimizer, etc.) necessary to understand the results?
    \item[] Answer: \answerYes{}
    \item[] Justification: Experiment settings and details are provided in part in the main text and in part in the appendix.
    \item[] Guidelines:
    \begin{itemize}
        \item The answer NA means that the paper does not include experiments.
        \item The experimental setting should be presented in the core of the paper to a level of detail that is necessary to appreciate the results and make sense of them.
        \item The full details can be provided either with the code, in appendix, or as supplemental material.
    \end{itemize}

\item {\bf Experiment Statistical Significance}
    \item[] Question: Does the paper report error bars suitably and correctly defined or other appropriate information about the statistical significance of the experiments?
    \item[] Answer: \answerYes{}
    \item[] Justification: Error bars are obtained by performing five repetitive runs for each method and dataset.
    \item[] Guidelines:
    \begin{itemize}
        \item The answer NA means that the paper does not include experiments.
        \item The authors should answer "Yes" if the results are accompanied by error bars, confidence intervals, or statistical significance tests, at least for the experiments that support the main claims of the paper.
        \item The factors of variability that the error bars are capturing should be clearly stated (for example, train/test split, initialization, random drawing of some parameter, or overall run with given experimental conditions).
        \item The method for calculating the error bars should be explained (closed form formula, call to a library function, bootstrap, etc.)
        \item The assumptions made should be given (e.g., Normally distributed errors).
        \item It should be clear whether the error bar is the standard deviation or the standard error of the mean.
        \item It is OK to report 1-sigma error bars, but one should state it. The authors should preferably report a 2-sigma error bar than state that they have a 96\% CI, if the hypothesis of Normality of errors is not verified.
        \item For asymmetric distributions, the authors should be careful not to show in tables or figures symmetric error bars that would yield results that are out of range (e.g. negative error rates).
        \item If error bars are reported in tables or plots, The authors should explain in the text how they were calculated and reference the corresponding figures or tables in the text.
    \end{itemize}

\item {\bf Experiments Compute Resources}
    \item[] Question: For each experiment, does the paper provide sufficient information on the computer resources (type of compute workers, memory, time of execution) needed to reproduce the experiments?
    \item[] Answer: \answerYes{}
    \item[] Justification: Compute information is given in the main text.
    \item[] Guidelines:
    \begin{itemize}
        \item The answer NA means that the paper does not include experiments.
        \item The paper should indicate the type of compute workers CPU or GPU, internal cluster, or cloud provider, including relevant memory and storage.
        \item The paper should provide the amount of compute required for each of the individual experimental runs as well as estimate the total compute. 
        \item The paper should disclose whether the full research project required more compute than the experiments reported in the paper (e.g., preliminary or failed experiments that didn't make it into the paper). 
    \end{itemize}
    
\item {\bf Code Of Ethics}
    \item[] Question: Does the research conducted in the paper conform, in every respect, with the NeurIPS Code of Ethics \url{https://neurips.cc/public/EthicsGuidelines}?
    \item[] Answer: \answerYes{}
    \item[] Justification: We confirm.
    \item[] Guidelines:
    \begin{itemize}
        \item The answer NA means that the authors have not reviewed the NeurIPS Code of Ethics.
        \item If the authors answer No, they should explain the special circumstances that require a deviation from the Code of Ethics.
        \item The authors should make sure to preserve anonymity (e.g., if there is a special consideration due to laws or regulations in their jurisdiction).
    \end{itemize}

\item {\bf Broader Impacts}
    \item[] Question: Does the paper discuss both potential positive societal impacts and negative societal impacts of the work performed?
    \item[] Answer: \answerNA{}
    \item[] Justification: This paper presents work whose goal is to advance the field of Machine Learning. There are many potential societal consequences of our work, none which we feel must be specifically highlighted here.
    \item[] Guidelines:
    \begin{itemize}
        \item The answer NA means that there is no societal impact of the work performed.
        \item If the authors answer NA or No, they should explain why their work has no societal impact or why the paper does not address societal impact.
        \item Examples of negative societal impacts include potential malicious or unintended uses (e.g., disinformation, generating fake profiles, surveillance), fairness considerations (e.g., deployment of technologies that could make decisions that unfairly impact specific groups), privacy considerations, and security considerations.
        \item The conference expects that many papers will be foundational research and not tied to particular applications, let alone deployments. However, if there is a direct path to any negative applications, the authors should point it out. For example, it is legitimate to point out that an improvement in the quality of generative models could be used to generate deepfakes for disinformation. On the other hand, it is not needed to point out that a generic algorithm for optimizing neural networks could enable people to train models that generate Deepfakes faster.
        \item The authors should consider possible harms that could arise when the technology is being used as intended and functioning correctly, harms that could arise when the technology is being used as intended but gives incorrect results, and harms following from (intentional or unintentional) misuse of the technology.
        \item If there are negative societal impacts, the authors could also discuss possible mitigation strategies (e.g., gated release of models, providing defenses in addition to attacks, mechanisms for monitoring misuse, mechanisms to monitor how a system learns from feedback over time, improving the efficiency and accessibility of ML).
    \end{itemize}
    
\item {\bf Safeguards}
    \item[] Question: Does the paper describe safeguards that have been put in place for responsible release of data or models that have a high risk for misuse (e.g., pretrained language models, image generators, or scraped datasets)?
    \item[] Answer: \answerNA{}
    \item[] Justification: The paper poses no such risks.
    \item[] Guidelines:
    \begin{itemize}
        \item The answer NA means that the paper poses no such risks.
        \item Released models that have a high risk for misuse or dual-use should be released with necessary safeguards to allow for controlled use of the model, for example by requiring that users adhere to usage guidelines or restrictions to access the model or implementing safety filters. 
        \item Datasets that have been scraped from the Internet could pose safety risks. The authors should describe how they avoided releasing unsafe images.
        \item We recognize that providing effective safeguards is challenging, and many papers do not require this, but we encourage authors to take this into account and make a best faith effort.
    \end{itemize}

\item {\bf Licenses for existing assets}
    \item[] Question: Are the creators or original owners of assets (e.g., code, data, models), used in the paper, properly credited and are the license and terms of use explicitly mentioned and properly respected?
    \item[] Answer: \answerYes{}
    \item[] Justification: Codes and datasets used for experiments are publicly available under permissive licenses.
    \item[] Guidelines:
    \begin{itemize}
        \item The answer NA means that the paper does not use existing assets.
        \item The authors should cite the original paper that produced the code package or dataset.
        \item The authors should state which version of the asset is used and, if possible, include a URL.
        \item The name of the license (e.g., CC-BY 4.0) should be included for each asset.
        \item For scraped data from a particular source (e.g., website), the copyright and terms of service of that source should be provided.
        \item If assets are released, the license, copyright information, and terms of use in the package should be provided. For popular datasets, \url{paperswithcode.com/datasets} has curated licenses for some datasets. Their licensing guide can help determine the license of a dataset.
        \item For existing datasets that are re-packaged, both the original license and the license of the derived asset (if it has changed) should be provided.
        \item If this information is not available online, the authors are encouraged to reach out to the asset's creators.
    \end{itemize}

\item {\bf New Assets}
    \item[] Question: Are new assets introduced in the paper well documented and is the documentation provided alongside the assets?
    \item[] Answer: \answerNA{}
    \item[] Justification: Code implementation of the proposed method will be released upon publication. Comprehensive documentation will be provided for reproducibility and access.
    \item[] Guidelines:
    \begin{itemize}
        \item The answer NA means that the paper does not release new assets.
        \item Researchers should communicate the details of the dataset/code/model as part of their submissions via structured templates. This includes details about training, license, limitations, etc. 
        \item The paper should discuss whether and how consent was obtained from people whose asset is used.
        \item At submission time, remember to anonymize your assets (if applicable). You can either create an anonymized URL or include an anonymized zip file.
    \end{itemize}

\item {\bf Crowdsourcing and Research with Human Subjects}
    \item[] Question: For crowdsourcing experiments and research with human subjects, does the paper include the full text of instructions given to participants and screenshots, if applicable, as well as details about compensation (if any)? 
    \item[] Answer: \answerNA{}
    \item[] Justification: The paper does not involve crowdsourcing nor research with human subjects.
    \item[] Guidelines:
    \begin{itemize}
        \item The answer NA means that the paper does not involve crowdsourcing nor research with human subjects.
        \item Including this information in the supplemental material is fine, but if the main contribution of the paper involves human subjects, then as much detail as possible should be included in the main paper. 
        \item According to the NeurIPS Code of Ethics, workers involved in data collection, curation, or other labor should be paid at least the minimum wage in the country of the data collector. 
    \end{itemize}

\item {\bf Institutional Review Board (IRB) Approvals or Equivalent for Research with Human Subjects}
    \item[] Question: Does the paper describe potential risks incurred by study participants, whether such risks were disclosed to the subjects, and whether Institutional Review Board (IRB) approvals (or an equivalent approval/review based on the requirements of your country or institution) were obtained?
    \item[] Answer: \answerNA{}
    \item[] Justification: The paper does not involve crowdsourcing nor research with human subjects.
    \item[] Guidelines:
    \begin{itemize}
        \item The answer NA means that the paper does not involve crowdsourcing nor research with human subjects.
        \item Depending on the country in which research is conducted, IRB approval (or equivalent) may be required for any human subjects research. If you obtained IRB approval, you should clearly state this in the paper. 
        \item We recognize that the procedures for this may vary significantly between institutions and locations, and we expect authors to adhere to the NeurIPS Code of Ethics and the guidelines for their institution. 
        \item For initial submissions, do not include any information that would break anonymity (if applicable), such as the institution conducting the review.
    \end{itemize}

\end{enumerate}

\end{document}